\pgfplotsset{compat=newest}
\newtheorem{definition}{Definition}
\newtheorem{proposition}{Proposition}
\newtheorem{Remark}{Remark}
\newtheorem{lemma}{Lemma}
\newtheorem{theorem}{Theorem}
\pgfplotsset{compat=1.10}
\def\BibTeX{{\rm B\kern-.05em{\sc i\kern-.025em b}\kern-.08em
		T\kern-.1667em\lower.7ex\hbox{E}\kern-.125emX}}
\begin{document}
% \title{Data Contribution under Statistical Information Leakage: A Conflict of Pricing and Privacy}
\title{Privacy-Aware Data Acquisition under \\ Data Similarity in Regression Markets}
\markboth{IEEE Journal Submission}% 
{}
\author{Shashi Raj Pandey,%
		\thanks{Shashi Raj Pandey and Petar Popovski are with the Connectivity Section, Department of Electronic Systems, Aalborg University, Denmark. Email: \{srp, petarp\}@es.aau.dk.}
		\textit{IEEE Member}, %
		Pierre Pinson,%
		\thanks{Pierre Pinson has primary affiliation with Dyson School of Design Engineering, Imperial College London, UK. Email: \{p.pinson\}@imperial.ac.uk. He is also affiliated to the Technical University of Denmark, Department of Technology, Management and Economics, as well as with Halfspace.} \textit{IEEE Fellow}, %
		and Petar Popovski, \textit{IEEE Fellow}
		\thanks{This work was supported by the Villum Investigator Grant “WATER” from the Velux Foundation, Denmark.}}
 
\maketitle 
\begin{abstract}
Data markets facilitate decentralized data exchange for applications such as prediction, learning, or inference. The design of these markets is challenged by varying privacy preferences as well as data similarity among data owners. Related works have often overlooked how data similarity impacts pricing and data value through statistical information leakage. We demonstrate that data similarity and privacy preferences are integral to market design and propose a query-response protocol using local differential privacy for a two-party data acquisition mechanism. In our regression data market model, we analyze strategic interactions between privacy-aware owners and the learner as a Stackelberg game over the asked price and privacy factor. Finally, we numerically evaluate how data similarity affects market participation and traded data value.
\end{abstract}
\begin{IEEEkeywords}
information leakage, regression markets, collaborative learning, mechanism design, Stackelberg game
\end{IEEEkeywords}
\IEEEpeerreviewmaketitle

\section{Introduction}
\subsection{Context and Motivation}
In recent years, there has been a surge in Internet of Things (IoT) devices with sensing and computing capabilities, leading to an abundance of IoT data. Massively distributed heterogeneous data fuels various emerging applications, such as forecasting and analytics \cite{pinson2022regression}, learning models \cite{mcmahan2017communication,pandey2020crowdsourcing}, and in diverse industry verticals \cite{popovski2021internet, saputra2019energy}. It is thus critical to investigate the ways in which this data becomes available, respecting the privacy constraints and/or offering incentives to the data owners. \emph{Data markets} act as platforms that facilitate the collection, exchange, and utilization of both personal and IoT data. In this study, the focus is on \emph{regression data markets} \cite{pinson2022regression, dekel2010incentive}, which addresses the regression problem in which data is distributed across multiple agents. We motivate the use and operation of regression data markets through two examples. 

\textbf{Example 1}: Consider a labour market where agents Alice and Bob are generating privacy-sensitive distinct explanatory features $\mathbf{x}$ that may explain a common target variable $y$. Say, Alice wants to learn a regression model that quantifies the target value $y$, e.g., \emph{hiring salary} using features $\mathbf{x}$ (such as age, gender, academic qualifications, etc.). This is effectively done by learning the regression mapping function that involves features obtained from Bob as input. However, Bob's disclosure of features in a truthful manner leads to privacy loss and requires reasonable monetary compensation from Alice. Another agent, Carol, enters the market with an extensive history of employee hiring experiences, gathering similar features held by Bob to explain $y$. However, she is constrained by her company's intellectual property (IP) regulations, preventing \emph{direct} (unaltered) sharing of features with Alice and instead employing privacy guarantees and appropriate compensation. Bob's features are correlated with Carol's, and his willingness to share affects Carol's decisions, while Bob and Carol have different privacy preferences. 

\textbf{Example 2}: Consider a startup company offering a location-based ride-sharing service. There exist publicly accessible cameras at bus stations to capture passenger activities, owned by online platforms such as ``NAVER" \cite{naver} and ``Kakao" \cite{kakao}. Arrangements are made to uphold passenger privacy and confidentiality. The data derived from these cameras are of poor quality and are intentionally non-real-time due to security considerations. The startup wants to attain accuracy in predicting the volume of ride-sharing requests. Prediction accuracy hinges upon diverse passenger features, such as age, gender, number of passengers, and more. The setting also includes security companies and owners of cameras that monitor both these bus stops and nearby taxi stand. The startup can incentivize camera owners to retrieve data from these sources and enhance their predictions. However, the data streams from these disparate cameras, although resembling each other, necessitate measures to protect passenger privacy, constraining the attainable prediction accuracy. 

A regression data market allows data-driven analysis using a conditional model that explores the dependency between the feature vector and the target value of interest. In Example 1, Alice acts as a \emph{learner} and can initiate a collaboration protocol to build a regression model, soliciting features from Bob and Carol. Alice can employ any regression mapping functions to the distributed features, subsequently assessing the collaboration's merit using a chosen convex loss function, such as mean square error (MSE). Both Bob and Carol share heterogeneous privacy preferences and offer data of different qualities, such that Alice must incentivize both of them, employing tailored pricing signals. In Example 2, companies owning different cameras observe similar data and are market competitors. These companies may agree to provide their data to the startup in a manner that is equitable and ensures data privacy. Their aim is to limit the exposure of data similarities and the ensuing ramifications, such as data value depression \cite{acemoglu2022too, pandey2023strategic}, while simultaneously meeting passenger privacy mandates. However, privacy breaches can arise from collaboratively computed target values or direct data exchanges among agents. This includes questions about who controls IoT devices and maintains control over the collected data as well as whether strategic agents can even exert influence over the outcome of computations.

In both examples, the agents are sensitive towards information leakage due to data similarity.
However, it is important to note that, in a practical data market, the agents might be unaware of such data similarities, particularly due to privacy constraints and the added cost of sharing data to assess correlation. This indicates the agent's desire for disclosure of private information is often independent of potential data similarities between them unless there exist other related implications of data similarities, such as on the value of shared data and corresponding pricing of it, which they want to mitigate, as discussed in \cite{acemoglu2022too, pandey2023strategic}.  In such cases, consequently, they might opt for strategies like secure data computation methods (e.g., MPC \cite{cramer2015secure, mpc}) or leverage differential privacy (DP) techniques \cite{dwork2008differential, fallah2022optimal} to execute statistical analysis while preserving privacy. Nevertheless, employing privacy mechanisms, such as DP, invariably leads to a trade-off between data privacy and the accuracy of learned statistical models. The application of rigorous data privacy techniques during data exchange can potentially compromise the model's efficacy, with diminishing accuracy, particularly in situations characterized by increased data similarity.

Thus, the learner is challenged with optimising query signals to extract distributed features. Two critical aspects should be considered: (i) the value brought forth by each agent's features in addressing the regression problem and (ii) the impact of correlated features. These considerations are bounded by constraints pertaining to pricing budgets and the inherent diversity in agent privacy preferences. This brings us to the main question addressed by the present study: 
\emph{``How to find the optimal trade-off between data privacy and agent's utility under data similarity in a regression data market?''}
\subsection{Related Work}

Existing literature \cite{agarwal2019marketplace, acemoglu2022too, pinson2022regression, dekel2010incentive} on data markets explore various interactions among agents, formulating algorithmic solutions. The interplay between data privacy and ownership significantly influences the broader data trading process and the quality of offered services provided through data exchanged. This duality, often termed the \emph{privacy-utility} trade-off \cite{dwork2006calibrating, dwork2008differential}, can be addressed through incentives that balance data privacy considerations without compromising utility.  In the context of a regression data market, an effective incentive structure becomes pivotal in aligning the strategies of distributed data sources towards a shared goal, like training a learning model \cite{mcmahan2017communication, konevcny2016federated, pandey2020crowdsourcing}. The overall operation is challenged by the data characteristics, competition \cite{acemoglu2022too, pandey2023strategic}, privacy requirements and the computing capabilities of agents.

Recent works on IoT data markets focus on data acquisition models leading to efficient incentive mechanism designs. The objective is to improve participation for various applications, such as training a learning model \cite{pandey2020crowdsourcing, nguyen2021marketplace} and data utility maximization. Auction-based designs are studied in the data market design, leading to truthful mechanisms for data exchanges. Numerous privacy-preserving distributed model training approach exists where updates on local raw data are shared rather than the data itself, as done in Federated Learning (FL)\cite{mcmahan2017communication, konevcny2016federated}. Alternatively, variants of DP methods are implemented \cite{dwork2008differential, dwork2014algorithmic}, where the agents supply features they have for training at a central entity after adding noise proportional to privacy sensitivity. Therein, it is required to quantify the value of an individual agent's contribution in lowering the prediction error of the trained regression model. However, strategic interaction between agents having heterogeneity in local privacy preferences and developing a participation mechanism that respects the impact of the unknown degree of information leakage due to data similarity (as hinted in Example 2) has not been explored yet in a regression market setting. In \cite{pandey2023strategic}, the authors proposed a game-theoretic setup that enables distributed coalition amongst devices with similar data properties to minimize information leakage to prevent adversarial under-pricing and data rivalry issues in the data market. However, they ignore the influence of heterogeneity in individual privacy preferences. In \cite{falconer2023bayesian}, the authors introduced a \emph{Bayesian regression markets} design, which is compatible when considering a more general class of regression tasks. The work builds the mechanism by adopting a Bayesian framework that offers a fair allocation of market revenue for data trading. 
Regarding the literature gap, the existing works  \cite{pandey2023strategic, pinson2022regression} mostly focus on incentive design, participation, and the value of data trading. Here, instead, we look into the optimal participation strategy of privacy-sensitive data holders to realize a regression data market under information leakage due to data similarity. 

\subsection{Contributions and Paper Structure}

Unlike previous works, mainly designing algorithms of data acquisition frameworks for high-quality training regression models, both \emph{batch} and \emph{online}, as an initial attempt, our work focuses on a different and novel perspective: incentive mechanism design in regression market to limit information leakage due to data similarity amongst participating agents. In particular, our work outlines a framework to characterize practical scenarios where privacy preferences must be considered when evaluating the impact of information leakage due to data similarity on the participation of agents in training the regression models. We model a data acquisition method where we jointly analyze the impact of data similarity, particularly correlation, on statistical information leakage in a privacy-aware finite player regression data market. We develop incentive strategies that enforce loss minimization objectives during collaborative data trading to solve the regression problem. This is done by taming agents to provide high-quality data through incentives. In other words, this translates into finding the right purchase of a minimum amount of privacy from individual agents to train a high-quality regression model. The proposed incentive design offers control of the heterogeneous privacy factors on the distributed data and elicits them to the \emph{learner}. For the offered pricing and privacy budget, we summarize the strategic interaction between the learner and the agents following a single-leader, multiple-followers Stackelberg game structure, with the learner having a first-mover advantage and acting as a leader, while the devices are the followers. We show the Nash best response strategies of the agents lead to a unique equilibrium in the non-cooperative game amongst followers. With devices employing the local DP technique to trade data, we show there exists a trade-off between the available data privacy budget and the value of device participation. Finally, we show extensive numerical evaluations to verify these observations.

The paper is structured as follows. Section~\ref{sec:sys_model} introduces the system model and problem setup. Section~\ref{sec:framework} develops the interaction framework between agents in the regression data market as a two-stage Stackelberg game. This involves designing utility models and the mechanism design with a first-order, low-complexity iterative solution to the posed problem. Section~\ref{sec:simulations} provides a performance evaluation of the proposed algorithm with extensive numerical results and shows comparative analysis with the intuitive baselines. Finally, Section~\ref{sec:conclusion} concludes this work. 

\section{System Model and Problem Setup} \label{sec:sys_model}
We consider a network comprised of agents accumulating distinct data features that, in principle, contribute to learning the parameters of a regression model \cite{pinson2022regression}. These agents can communicate with each other through a platform that hosts a data market for training a regression model: a \emph{regression data market}. Then, any one of the interested agents can position herself as the ``learner'' and initiate the exchange of the feature observations. In particular, the learner can send a query request to incorporate high-quality features from other agents to improve the overall prediction accuracy of her model. Meanwhile, as agents are privacy-aware, they execute differential privacy methods (details on DP in Definition~1, Section~\ref{sec:framework}-A) to avoid potential privacy loss during data trading with the learner. Therefore, to maintain the quality of the collected features and the utility they bring, the learner specifically asks agents to abide by a common privacy measure (i.e., the \emph{privacy factor}) in sharing their features for an offered pricing. However, in practice, these devices exhibit heterogeneity in their data privacy preference, \emph{unknown} to the learner, as opposed to the commonly asked privacy factor. This suggests the strategic interaction of agents with the learner. Further, the data features they hold might be correlated, but to an \emph{unknown} degree, resulting in possible information leakage during features trading. In principle, correlated data contribute less to improving the performance of the trained model. Therefore, not all agents but a subset of available agents in the platform participate strategically in training the regression model by supplying high-quality features. In the following, we formalize the problem setup.

\textbf{Agents}. Like~\cite{pinson2022regression}, we consider a regression market consisting of a set of $n$ agents $\mathcal{A}=\{a_1,a_2,\ldots,a_n\}$. One of these agents $i$, called \emph{learner} $a_i$, is trying to fit a regression model based on its own features as well as features bought from other agents. The learner is initiating interaction with the other agents through the platform. Following the naming convention in~\cite{pinson2022regression}, we call the learner $a_{i}$ the \emph{central agent} and the remaining agents $a_j, j \ne i$ the \emph{support agents}. The term ``data samples" means ``features" in the context of regression tasks and will be used interchangeably throughout this work. Without loss of generality, the learner is $a_1$.

We assume that data samples are collected by agents at discrete points in time $t \in \{1,2,\ldots, \tau\}$. The central agent has a response target variable $Y_t\in\mathcal{Y}$, denoted for each time instance, and is trying to train a regression model that allows it to understand some statistics of $\{Y_t\}$, collected over $\tau$ time instances. Then, a time series collection $\{y_t\}$ is obtained, which denotes the realization of target variable $\{Y_t\}$, one for each time instance. The regression model relies on several explanatory variables (commonly called input features) indexed by a set $\Omega$. Consider $K$ explanatory variables such that $\Omega = \{x_k, k = 1,\hdots, K\}$ and denote $x_{k,t} \in\mathbb{R}, \forall k,t,$ be the $x_k$ feature observed\footnote{For simplicity, we restrict $x_{k,t} \in\mathbb{R}$.} at time instance $t$. Correspondingly, $\mathbf{y}=[y_1,\ldots,y_\tau]^\top$ is the collected target variable. Following our assumption that the features are observed at each time instance, we denote $\mathbf{x}_k = [x_{k,1}, \hdots, x_{k,\tau}]^\top$ be the vector of all observed values for the feature $x_k$ and $\mathbf{x}_t = [x_{1,t}, \hdots, x_{K,t}]^\top$ be the vector of all features observed at time instance $t$. Each agent in $\mathcal{A}$ holds a subset of relevant explanatory variables $\omega \subset \Omega$ and $\mathbf{x}_{\omega,t}$ the vector of features at time instance $t$. We assume that the data is complete and non-redundant, i.e., for each explanatory variable at each time step, a unique agent is holding the corresponding feature. However, we will not assume these explanatory variables are uncorrelated with each other, i.e., there is a possibility of having data similarity between samples of different agents. If we closely look at this setting, it is a setup in the vertical model training setting of federated optimization \cite{yang2019federated}, where the features are distributed across agents. This requires the acquisition of data distributed amongst several privacy-aware agents to identify the explanatory variable over time, which is our focus in this work. We denote $X_\omega \in \mathbb{R}^{\tau \times |\omega|}$ as the design matrix, whose column denotes the features observed, and the $t$'th row is $\mathbf{x}^\top_{\omega,t}$.

\textbf{Learning Model.} The central agent $a_i$ holds a set of features $\omega_i \subset \Omega$ and the target variable $y$. In a general setting, each agent $j \in \mathcal{A}\setminus a_i$ has a set of features $\omega_j \subset \Omega$, such that $|\omega_i| + \sum_j|\omega_j| = K$. As the features of potential relevance are distributed amongst the supporting agents $j$, at a regular time interval or at every time instance, the central agent aims to obtain those features to maximize its prediction ability on the target variable. Then, the regression problem of the central agent is to describe a mapping function $f$ between the set of explanatory variables $\omega \subset \Omega$ and the target variable $y$, i.e., $f: \mathbf{x}_{\omega,t} \in \mathbb{R}^{|\omega|} \rightarrow y_t \in \mathbb{R}$. As in the usual regression setup, the structure of $f$ defines the set of explanatory variables. For simplicity, we consider a linear regression mapping that can be described fully with the set of parameters $\boldsymbol{\theta}_\omega = [\theta_0,\theta_1,\ldots,\theta_d]^\top$, where $d= |\omega|$. Then, the mapping can be described as $y_t = \theta_0 + \sum_{k|x_k \in \omega}\theta_kx_{k,t} + n_t, \forall t$, with $n_t$ defining a Gaussian noise with zero mean and unit variance. The learning objective is to find the optimal set of parameters $\hat{\boldsymbol{\theta}}$ that minimizes the chosen loss function $l$, commonly taken as a quadratic function of prediction errors (residuals) denoted $e_t = y_t - \boldsymbol{\theta}_\omega^\top\mathbf{\tilde{x}}_t$, in expectation, as
\begin{equation}
\hat{\boldsymbol{\theta}}_\omega= \arg\min_{\boldsymbol{\theta}_\omega}\mathbb{E}[l(e_t)],
\end{equation}
where $\mathbf{\tilde{x}}_t = [1, \mathbf{x}_t]^\top$ with the first element a unit value to incorporate the bias term $\theta_0$ during computation. Here, $\boldsymbol{\theta}_\omega^\top\mathbf{\tilde{x}}_t$ results in the prediction $\tilde{y}_t$ with the parameter $\boldsymbol{\theta}_\omega^\top$.
In some settings, the central agent might have additional budget constraints, and the optimization of $\ell$ will happen subject to these constraints in this case\footnote{As an example, consider the central agent incentivize supporting agents to find the best estimate of $\hat{\boldsymbol{\theta}}_\omega$. Then, the budget constraint at the central agent can be defined in terms of the total available monetary value to offer.}.

The map $f$ is commonly described in a matrix form $\tilde{\mathbf{y}}=\tilde{X}_\omega\boldsymbol{\theta}_\omega$ with the $t$'th row of $\tilde{X}_\omega$ is $\tilde{\mathbf{x}}_{\omega,t}$. The model is then completely determined by the design matrix $\tilde{X}_\omega$, where $\tilde{X}_\omega$ is constructed executing the data acquisition protocol. For example, the loss function at any time instance can be given by
\begin{equation}%\label{eq:ridgeLoss}
  \ell(\mathbf{y};\boldsymbol{\theta}_\omega)=\lVert \mathbf{y} - \tilde{X}_\omega\boldsymbol{\theta_\omega} \rVert^2,
  \label{eq:model}
\end{equation}
where the central agent aims to obtain the best parameter $\boldsymbol{\theta}_\omega$ that solves (1).
In the considered learning setup, we define $\tilde{\boldsymbol{\theta}}_{\omega, t}$ as the optimal set of parameters minimizing the expected loss $l$ over time and  $\tilde{L}_{\omega, t}$ as the time-varying estimator of the loss function. 
Note that, in an online learning setting, the optimization problem for updating the parametric information follows a recursive approach, where new information is a function of the latest residuals (c.f. \cite{pinson2022regression}, Eq. (16) and Eq. (17)). To that end, we consider $q_n \in [0,1], \forall n \in \mathcal{A}\setminus a_{i}$ is the participation probability to define the involvement of privacy-aware supporting agents in trading explanatory data samples. This means the construction of $X_\omega$ at the learner relies on the randomly perturbed features (see Definition 1, Sec.~\ref{subsec:strategies}) from the supporting agents, particularly as per their individual data privacy preference. Intuitively, as in \cite{zheng2020privacy}, such noisy, lower-quality contributing features impact the learner's ability to correctly map input features to the target variable, following larger parameter estimation errors. Furthermore, with $q_n = 0$, the agent $n$ opts out of data trading in the market.

\textbf{Problem Definition}. At any time instance, the ultimate goal of the central agent is to find the optimal mapping parameters \footnote{In principle, the optimal mapping parameters can be static and time-varying as per the regression problem setting. In any case, this will not influence the overall analysis made hereafter.} $\hat{\boldsymbol{\theta}}$ executing the data acquisition protocol with the supporting agents. The central agent aims to maximize the regression model performance by influencing supporting agents with appropriate pricing signals to exchange high-quality data, i.e., with asked perturbations as the privacy factor in DP terms. The supporting agents tune their responses per individual privacy preferences while considering the impact of data similarity and the announced pricing. In the following, we present the details of the interaction framework.

\section{Interaction Framework in the Regression Market}\label{sec:framework}
In this section, we show the strategic interaction between the central agent and the supporting agents in the linear regression market. We will model the two-stage game and derive strategies of the supporting agents for participation given the asked pricing and privacy factor under a potential data similarity situation from the central agent at equilibrium, respectively. After the mechanism design, we analyze the properties of the derived solution through a low-complexity, first-order backward induction method. 
\subsection{Strategies of agents}\label{subsec:strategies}
In this subsection, we explore the strategic behaviour of agents in the regression market. Specifically, we characterize the privacy preferences of individual agents and the impact of data similarity on their participation strategy.  

Each agent $a_n \in \mathcal{A}$ has a preference on data privacy $\epsilon_n$ \emph{unknown} to the central agent. While the value of privacy preferences differs amongst agents, we assume the agents shared information about its distribution. Let $F_{\varepsilon}$ be the cumulative distribution function (cdf) capturing the realizations $\epsilon_n$ from the random variable (RV) $\varepsilon \sim U[\epsilon_\textrm{l},\epsilon_\textrm{u}]$. A higher value of $\epsilon_n$ implies a lower sensitivity towards data privacy, while a lower value means the agents prefer injecting more noise on the traded data -- generating perturbed statistics on explanatory features -- to meet tighter privacy requirements. Intuitively, smaller $\epsilon \sim f_{\varepsilon}$ corresponds to lower information leakage in the data market, where $f_{\varepsilon}$ is the probability density function (pdf) of $\varepsilon$. Then, the agent can be called \textit{$\epsilon-$type} to characterize their privacy preference, and the corresponding participation strategy is referred to as $q_n(\epsilon_n) \in [0, 1]$. With this, we have the following definition. 
\begin{definition}
 The participation strategy $q_n(\epsilon_n)$ of an agent $a_n$ is defined over its privacy preference $\epsilon_n$ as the probability of joining the regression market such that  $q_n(\epsilon_n) \in [0, 1]$. 
\end{definition} We use a shorthand $q_n$ hereafter. Therein, we formalize the privacy preference with the following definition of local differential privacy.
\begin{definition}[$\epsilon$-Local Differential Privacy \cite{dwork2008differential}]\label{def:dp}
A mechanism $M(\cdot)$ satisfies $\epsilon-$local differential privacy ($\epsilon-LDP$) for $\epsilon\ge0$, if an only if, given any input data sample $x, x'\in Dom(M(\mathcal{X}))$, we have
\begin{equation}
 \forall y \in \textrm{Dom}(M): \mathds{P}[M(x)=y] \le e^{\epsilon}\mathds{P}[M(x')=y], 
\end{equation}
where $M(\mathcal{X})$ is a mapping to discrete values denoting the set of all possible outcomes of $M$.
\end{definition}

As outlined, the direct consequence of data similarity, particularly the correlation between traded data, is the undervaluation of data and price allocation mismatch amongst the agents - leading to market distrust and, eventually, a dropout scenario. To mitigate information disclosure, the agents employ a local DP strategy that equivalently limits their participation contribution; as such, the offered reward gets lowered. Hence, the supporting agents are reluctant to participate beyond their privacy budget, as defined by the realizations of $\varepsilon$. 

Because the exact privacy preference of each agent is private information, the disclosure of such information induces additional privacy costs for the supporting agents \cite{verrecchia1983discretionary}; hence, it is often \emph{unknown} to the central agent. Instead, the central agent plays around with various pricing signals and the asked privacy factors to align the strategies of supporting agents in improving the trained model performance. On the other hand, to counter potential information leakage, the supporting agents compete non-cooperatively and align strategies -- \emph{as per their type} -- of injecting structured noise into the trading data. We assume any rational agent joins the data market with the pricing compatible with their incurred costs, both in terms of data privacy and information leakage due to adjustments in the privacy budget; however, they opt-out from the market if the asked noise level is out of their individual privacy preference.  

\begin{Remark}
We define statistical information leakage as the ability of the central agent to decode the true type of the supporting agents precisely, manipulating the pricing and data valuation. The supporting agents opt out of the market if the information leakage exceeds their privacy preference.  
\end{Remark}

We observe the interaction between the central agent and the supporting agents, therefore, can be realized as a single leader multiple followers Stackelberg game, where supporting agents are stimulated by the central agent (the leader) to align their strategies on participation and the adjustment of privacy factor as DP noise during data trading in a non-cooperative manner. The central agent holds a first-mover advantage in asking with an uniform pricing offer for privacy factors \cite{liu2016interference}, while the followers \textit{compete over} the offered pricing to supply data samples implementing a privacy factor that maximizes their utility in participation under potential data similarity conditions. We consider a platform-centric mechanism \cite{feng2019joint, luong2016data}, where the price allocation between supporting agents operates as \emph{ex-post} rather than \emph{ex-ante}; thus, the offered pricing is shared amongst the agents per their contributions with the implemented privacy factor.

Next, we model the utility functions that characterize such interactions between the central and supporting agents. Based on the utility models, we develop the two-stage game and show the existence of Stackelberg equilibrium in such interactions.

\subsection{Utility Models}\label{subsec:utilitymodels}
We use $p$ to define the offered pricing by the central agent for the asked privacy factor $\mathbf{\epsilon}$. Then, for a given $p$, we model the valuation of the 
central agent as a monotonically increasing concave function of the privacy factor: $U(\epsilon) = {\ln[\alpha \epsilon p + 1]}^{-\beta}$, where $\alpha > 0, \beta \in (-1,0)$ are system parameters. Intuitively, the proposed valuation function captures the improvement in the central agent's utility when obtaining high-quality data samples from the supporting agents. The supporting agents exhibit heterogeneity in their privacy preferences, which is \textit{unknown} to the central agent. This implies the response of the supporting agents with the asked privacy factor and the offered pricing differs based on the solution to their utility maximization problem. Note that we follow a general structure in defining the valuation function and the agent's utility (later in \eqref{eq:ca_utility} and  \eqref{eq:sa_utility}, respectively, for the central agent and the supporting agents), which is a difference between the valuation and cost \cite{yu2017mobile, luong2016data, niyato2016market}. Therein, we can define the central agent's utility as follows.

\textbf{Central Agent's Utility:} 
Making the standard assumption of the concavity of the utility function \cite{gorman1968structure, singh1984price}, we propose the learner utility function\footnote{As an initial attempt to stimulate the participation of the supporting with no worse than $\epsilon$ privacy factor on available data samples, this utility function is the cleanest and most general form we can construct. There could be ways to define the valuation function. However, it eventually boils down to a similar interpretation of the utility functions as long as the concavity holds.} is based on the valuation $U(\epsilon)$ with the following definition.
\begin{definition} \label{def:learner_utility}
Considering the improvement in model prediction accuracy $ L(\zeta)= 1/(|\tilde{L}_{\omega_i} - \tilde{L}_{\Omega}|)$, using features obtained from supporting agents through participation, the central agent's utility is defined as  
\begin{equation} 
 \begin{aligned}
   S (p;\epsilon) 
   &= L(\zeta)\frac{1}{{\ln[\alpha \epsilon p + 1]}^\beta} - p \sum\nolimits_{n \in \mathcal{A}} \mathbb{1}_{\epsilon_n(q_n) > \epsilon}, \label{eq:ca_utility}
\end{aligned}   
\end{equation}
where $\zeta < \zeta_{\textrm{ref}}$ is the relative accuracy of the trained regression model for a reference requirement of $\zeta_{\textrm{ref}}$ in the regression market, $p$ is the offering pricing of the central agent to stimulate individual participation with no worse than $\epsilon$ privacy factor on the available data samples.   
\end{definition}
Following Definition~\ref{def:learner_utility}, given the participation of all agents, we have $p\ge\sum_{n}p_{a_n}$; this reflects the ex-post allocation of the offered price based on the supporting agent's contributions, as discussed earlier. And for a known participation that quantifies $L(\zeta)$, $S (p;\epsilon)$ is decreasing in $\alpha$ and $\beta$, and is concave with the privacy factor $\epsilon$. In Fig.~\ref{fig:CA_utility}, we illustrate the influence of the asked privacy factor on the valuation of the central agent for different system parameters. We observe a larger $\beta$ offers flexibility in data privacy factor with fair compromise on the utility. It is of interest for the central agent to solicit high-quality data with a relaxed privacy factor for the available pricing budget. Then, the central agent adopts an ex-ante differentiated pricing scheme to incentivize supporting agents in terms of the valuation of their data as a model contribution (c.f., Definition~\ref{def:SV}).
%Fig1
\begin{figure}[t!]
    \centering
% This file was created with tikzplotlib v0.10.1.
\begin{tikzpicture}

\definecolor{darkgray176}{RGB}{176,176,176}
\definecolor{darkturquoise0191191}{RGB}{0,191,191}
\definecolor{green01270}{RGB}{0,127,0}
\definecolor{lightgray204}{RGB}{204,204,204}

\begin{axis}[
legend cell align={left},
legend style={
  fill opacity=0.8,
  draw opacity=1,
  text opacity=1,
  at={(0.03,0.97)},
  anchor=north west,
  draw=lightgray204
},
tick align=outside,
tick pos=left,
x grid style={darkgray176},
xlabel={\(\displaystyle \epsilon \)},
xmin=-0.495, xmax=10.395,
xtick style={color=black},
y grid style={darkgray176},
ylabel={\(\displaystyle U(\epsilon)\) },
ymin=-0.0875318378636767, ymax=1.83816859513721,
ytick style={color=black}
]

\addplot [line width=0.48pt, blue]
table {%
0 0
0.1 0.00591176044830896
0.2 0.0116537816247951
0.3 0.0172355392482104
0.4 0.0226657370614007
0.5 0.0279523884750317
0.6 0.0331028876955146
0.7 0.0381240719217299
0.8 0.043022275923389
0.9 0.0478033800941
1 0.0524728528934983
1.1 0.0570357884467325
1.2 0.0614969399495921
1.3 0.0658607494285202
1.4 0.070131374322634
1.5 0.0743127112864966
1.6 0.0784084175552048
1.7 0.0824219301653667
1.8 0.0863564832851076
1.9 0.0902151238720434
2 0.0940007258491472
2.1 0.0977160029637342
2.2 0.10136352047369
2.3 0.104945705786996
2.4 0.108464858165072
2.5 0.111923157587085
2.6 0.115322672860799
2.7 0.118665369055547
2.8 0.121953114324179
2.9 0.125187686173299
3 0.128370777234479
3.1 0.131504000583359
3.2 0.134588894648485
3.3 0.13762692774728
3.4 0.140619502282623
3.5 0.143567958630063
3.6 0.146473578742645
3.7 0.149337589497595
3.8 0.152161165806752
3.9 0.154945433510474
4 0.157691472072854
4.1 0.160400317094406
4.2 0.163072962656839
4.3 0.16571036351323
4.4 0.168313437135644
4.5 0.170883065631214
4.6 0.173420097536677
4.7 0.175925349500513
4.8 0.178399607861022
4.9 0.180843630127977
5 0.183258146374831
5.1 0.185643860547886
5.2 0.188001451698294
5.3 0.190331575142289
5.4 0.192634863554601
5.5 0.194911927999626
5.6 0.197163358904553
5.7 0.199389726978322
5.8 0.201591584079996
5.9 0.203769464039849
6 0.205923883436232
6.1 0.208055342331029
6.2 0.210164324966352
6.3 0.212251300424868
6.4 0.214316723256038
6.5 0.216361034070346
6.6 0.218384660103463
6.7 0.220388015752157
6.8 0.222371503083626
6.9 0.224335512319821
7 0.22628042229822
7.1 0.228206600910412
7.2 0.230114405519764
7.3 0.232004183359351
7.4 0.233876271911263
7.5 0.235730999268329
7.6 0.237568684479211
7.7 0.239389637877794
7.8 0.241194161397722
7.9 0.242982548872854
8 0.244755086324423
8.1 0.24651205223557
8.2 0.248253717813927
8.3 0.249980347242867
8.4 0.251692197922001
8.5 0.253389520697465
8.6 0.255072560082522
8.7 0.256741554468958
8.8 0.25839673632973
8.9 0.260038332413296
9 0.261666563930036
9.1 0.263281646731145
9.2 0.264883791480361
9.3 0.266473203818867
9.4 0.268050084523697
9.5 0.269614629659939
9.6 0.271167030727037
9.7 0.272707474799455
9.8 0.274236144661969
9.9 0.27575321893982
};
\addlegendentry{$\alpha=0.3$, $\beta=-0.2$}
\addplot [line width=0.48pt, green01270]
table {%
0 0
0.1 0.0176067541667098
0.2 0.034471078496421
0.3 0.0506530603733465
0.4 0.0662057753910294
0.5 0.0811763375986761
0.6 0.0956067601882
0.7 0.109534666251891
0.8 0.122993879899184
0.9 0.136014921114284
1 0.148625422572993
1.1 0.16085048273706
1.2 0.172712966570215
1.3 0.184233762931698
1.4 0.195432005927468
1.5 0.206325266110812
1.6 0.216929716330145
1.7 0.227260276154104
1.8 0.237330738111094
1.9 0.247153878423759
2 0.256741554468958
2.1 0.266104790828164
2.2 0.27525385549456
2.3 0.284198327557823
2.4 0.292947157485291
2.5 0.301508720950552
2.6 0.309890867020947
2.7 0.31810096139884
2.8 0.326145925313678
2.9 0.334032270579445
3 0.341766131262427
3.1 0.349353292345152
3.2 0.356799215722044
3.3 0.364109063819437
3.4 0.371287721095772
3.5 0.37833981364628
3.6 0.385269727109202
3.7 0.392081623047123
3.8 0.398779453956644
3.9 0.405366977041951
4 0.411847766872463
4.1 0.418225227031318
4.2 0.424502600849736
4.3 0.430682981312024
4.4 0.436769320206925
4.5 0.442764436593122
4.6 0.448671024639642
4.7 0.454491660895767
4.8 0.460228811039528
4.9 0.465884836149053
5 0.471461998536658
5.1 0.476962467181788
5.2 0.482388322795444
5.3 0.487741562545695
5.4 0.493024104471139
5.5 0.498237791606702
5.6 0.503384395844002
5.7 0.508465621546504
5.8 0.513483108937916
5.9 0.518438437280697
6 0.523333127860071
6.1 0.528168646787673
6.2 0.532946407637734
6.3 0.537667773927688
6.4 0.542334061454073
6.5 0.546946540493749
6.6 0.55150643787964
6.7 0.55601493895949
6.8 0.560473189445457
6.9 0.564882297161757
7 0.569243333697043
7.1 0.573557335967668
7.2 0.577825307697547
7.3 0.582048220819897
7.4 0.586227016805759
7.5 0.590362607923831
7.6 0.594455878435843
7.7 0.598507685731385
7.8 0.602518861405835
7.9 0.606490212284769
8 0.61042252139802
8.1 0.614316548906321
8.2 0.618173032983275
8.3 0.621992690655217
8.4 0.625776218601346
8.5 0.62952429391638
8.6 0.633237574837799
8.7 0.636916701439652
8.8 0.640562296294735
8.9 0.644174965106874
9 0.647755297314907
9.1 0.651303866669875
9.2 0.65482123178683
9.3 0.658307936672596
9.4 0.661764511230733
9.5 0.665191471744852
9.6 0.668589321341421
9.7 0.671958550433065
9.8 0.675299637143356
9.9 0.678613047714014
};
\addlegendentry{$\alpha=0.45$, $\beta=-0.4$}
\addplot [line width=0.48pt, red]
table {%
0 0
0.1 0.0264101312500647
0.2 0.0517066177446314
0.3 0.0759795905600197
0.4 0.099308663086544
0.5 0.121764506398014
0.6 0.1434101402823
0.7 0.164301999377837
0.8 0.184490819848776
0.9 0.204022381671426
1 0.22293813385949
1.1 0.24127572410559
1.2 0.259069449855323
1.3 0.276350644397546
1.4 0.293148008891203
1.5 0.309487899166218
1.6 0.325394574495217
1.7 0.340890414231156
1.8 0.355996107166641
1.9 0.370730817635639
2 0.385112331703437
2.1 0.399157186242246
2.2 0.412880783241841
2.3 0.426297491336735
2.4 0.439420736227936
2.5 0.452263081425828
2.6 0.464836300531421
2.7 0.477151442098259
2.8 0.489218887970517
2.9 0.501048405869167
3 0.512649196893641
3.1 0.524029938517727
3.2 0.535198823583066
3.3 0.546163595729155
3.4 0.556931581643657
3.5 0.56750972046942
3.6 0.577904590663803
3.7 0.588122434570685
3.8 0.598169180934966
3.9 0.608050465562927
4 0.617771650308695
4.1 0.627337840546977
4.2 0.636753901274605
4.3 0.646024471968036
4.4 0.655153980310388
4.5 0.664146654889683
4.6 0.673006536959464
4.7 0.68173749134365
4.8 0.690343216559293
4.9 0.69882725422358
5 0.707192997804988
5.1 0.715443700772682
5.2 0.723582484193165
5.3 0.731612343818543
5.4 0.739536156706709
5.5 0.747356687410053
5.6 0.755076593766003
5.7 0.762698432319755
5.8 0.770224663406874
5.9 0.777657655921045
6 0.784999691790107
6.1 0.79225297018151
6.2 0.799419611456601
6.3 0.806501660891532
6.4 0.813501092181109
6.5 0.820419810740623
6.6 0.827259656819459
6.7 0.834022408439235
6.8 0.840709784168185
6.9 0.847323445742635
7 0.853865000545564
7.1 0.860336003951502
7.2 0.86673796154632
7.3 0.873072331229846
7.4 0.879340525208639
7.5 0.885543911885747
7.6 0.891683817653764
7.7 0.897761528597077
7.8 0.903778292108752
7.9 0.909735318427153
8 0.915633782097029
8.1 0.921474823359481
8.2 0.927259549474913
8.3 0.932989035982825
8.4 0.938664327902019
8.5 0.94428644087457
8.6 0.949856362256699
8.7 0.955375052159478
8.8 0.960843444442102
8.9 0.96626244766031
9 0.971632945972361
9.1 0.976955800004813
9.2 0.982231847680244
9.3 0.987461905008894
9.4 0.992646766846099
9.5 0.997787207617278
9.6 1.00288398201213
9.7 1.0079378256496
9.8 1.01294945571503
9.9 1.01791957157102
};
\addlegendentry{$\alpha=0.45$, $\beta=-0.6$}
\addplot [line width=0.48pt, darkturquoise0191191]
table {%
0 0
0.1 0.0615688329089026
0.2 0.118736004094619
0.3 0.172089103693556
0.4 0.222105389278624
0.5 0.26917778929697
0.6 0.313633670220819
0.7 0.355748657009157
0.8 0.395756993468886
0.9 0.433859432660289
1 0.470229331921695
1.1 0.505017421473486
1.2 0.538355578593941
1.3 0.5703598462849
1.4 0.601132870947137
1.5 0.630765888291416
1.6 0.65934035437308
1.7 0.686929295230015
1.8 0.713598431444088
1.9 0.739407121218666
2 0.764409156021949
2.1 0.788653435618212
2.2 0.812184543783247
2.3 0.835043241738492
2.4 0.857266893024152
2.5 0.878889830934488
2.6 0.899943677588387
2.7 0.920457622079057
2.8 0.940458663843391
2.9 0.959971826342718
3 0.979020345297693
3.1 0.997625835029107
3.2 1.01580843589115
3.3 1.03358694531892
3.4 1.05097893462804
3.5 1.06800085338587
3.6 1.08466812290815
3.7 1.10099522021311
3.8 1.11699575357845
3.9 1.13268253068915
4 1.14806762023146
4.1 1.16316240767496
4.2 1.17797764588875
4.3 1.19252350115531
4.4 1.20680959507531
4.5 1.22084504279604
4.6 1.23463848794364
4.7 1.24819813459466
4.8 1.26153177658283
4.9 1.27464682440337
5 1.28755032994728
5.1 1.30024900927231
5.2 1.31274926359478
5.3 1.32505719866636
5.4 1.33717864268284
5.5 1.34911916285618
5.6 1.36088408076794
5.7 1.37247848660999
5.8 1.38390725240797
5.9 1.39517504431337
6 1.4062863340419
6.1 1.41724540952843
6.2 1.42805638486181
6.3 1.43872320955738
6.4 1.44924967721939
6.5 1.45963943364084
6.6 1.46989598438409
6.7 1.48002270188162
6.8 1.49002283209301
6.9 1.49989950075085
7 1.5096557192259
7.1 1.51929439003884
7.2 1.5288183120439
7.3 1.53823018530757
7.4 1.54753261570366
7.5 1.55672811924425
7.6 1.5658191261645
7.7 1.57480798477804
7.8 1.5836969651181
7.9 1.59248826237858
8 1.6011840001681
8.1 1.60978623358911
8.2 1.61829695215315
8.3 1.62671808254274
8.4 1.63505149122935
8.5 1.64329898695644
8.6 1.65146232309583
8.7 1.65954319988503
8.8 1.6675432665527
8.9 1.67546412333887
9 1.68330732341617
9.1 1.69107437471773
9.2 1.69876674167729
9.3 1.70638584688629
9.4 1.71393307267298
9.5 1.72140976260757
9.6 1.72881722293781
9.7 1.73615672395864
9.8 1.74342950131964
9.9 1.75063675727353
};
\addlegendentry{$\alpha=0.8$, $\beta=-0.8$}
\end{axis}
\end{tikzpicture}
    \caption{An illustration of learner's valuation $U(\epsilon)$ for asked data privacy factor $\epsilon$.}
    \label{fig:CA_utility}
\end{figure}
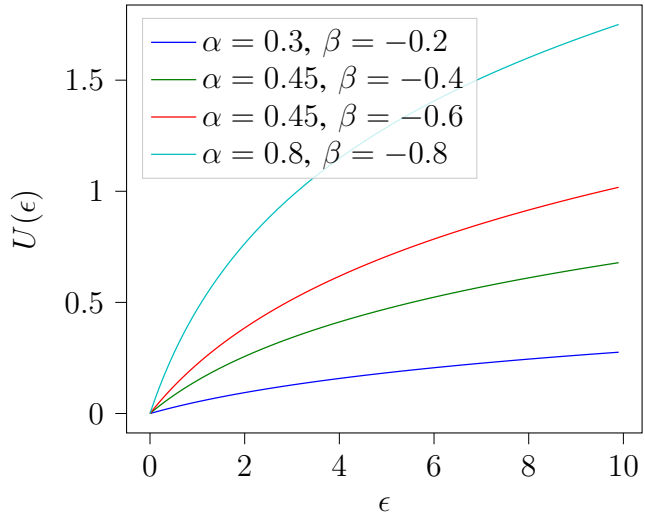
% \begin{figure*}[t!]
%     \centering
%     \includegraphics[width=0.7\linewidth]{Figs/agents_cr.png}
%     \caption{Influence of information leakage on model performance due to correlation between agents \{3\} and \{4\} in the data market.}
% \end{figure*}

\begin{Remark}
Participation of supporting agents with quality data in the data trading improves the performance of the loss estimates. The model contribution is then defined and evaluated as the marginal contribution of individual participation, where $L_n(\zeta)$ is calculated using the standard Shapley Value \cite{shapley1953value}, following Definition~\ref{def:SV}.
\end{Remark}
We assume the central agent updates the privacy factor during interactions with the supporting agents.

At each time instance, the central agent chooses the pricing signal $p$ that maximizes its utility defined as a composite function of the performance improvement, expressed in terms of relative accuracy $\zeta$, and the information leakage due to the injection of the statistical uncertainty (i.e., the noise) $\epsilon$.
\begin{definition}\label{def:SV}
The contribution of supporting agents $a_n \in \mathcal{A}$ in each iteration of interaction as
\begin{equation}
    L_n(\zeta) = \frac{1}{|\mathcal{A}|!}\sum\nolimits_{\pi \in\Pi(q_n)}[V(\mathcal{A}^{\pi}_{a_n} \cup \{a_n\}) - V(\mathcal{A}_{a_n}^{\pi})],
    \label{eq:shapTMC}
\end{equation}
where $V(\cdot)$ is the standard valuation of traded data \cite{pandey2022fedtoken} -- commonly known as performance score -- contributing to improving the model accuracy or lowering the loss function.
\end{definition}
We then have three particular interpretations following the central agent's utility function, given that all supporting agents trade their data as per their privacy budget, as follows.

\begin{enumerate}[label=C-\Roman*.]
    \item When $\epsilon=0$, we have $S(p;\epsilon)<0$, considering the definition of the utility function of the central agent $S(p;\epsilon)$. Furthermore, due to privacy restrictions, for $\epsilon=0$, we also have $L(\zeta) \approx 0$, i.e., no contribution of the supporting agents. 
    \item When $\exists n: \epsilon_n \sim f_{\varepsilon}$, and $\epsilon \sim \mathrm{U}[\epsilon_\textrm{l},\epsilon_\textrm{u}]$, we have $S(p;\epsilon) > 0$, leaving the central agent to solve the following optimization problem:
	\begin{maxi!}[2]  
  	% maxi! = maximize
		{p}                               % optimization variable
		{S(p;\epsilon)} {\label{opt:P}}{\textbf{P:}}
		\addConstraint{\sum\nolimits_{n}p_{a_n}\leq p,  \label{cons1:budget}}
		\addConstraint{p>0, \label{cons3:positive_pricing}}
		\addConstraint{L(\zeta) > \zeta_{\textrm{ref}}}.
	\end{maxi!}   
	Problem \textbf{P} is, therefore, an integral structure of the mechanism design problem where the central agent plays with its pricing signal for arbitrary privacy restriction on the supporting agents to ensure a level of performance $\zeta_{\textrm{ref}}$. 
  \item Following C-I, we have $S(p;\epsilon) \le 0$ when $\epsilon \notin [\epsilon_\textrm{l},\epsilon_\textrm{u}]$. Conversely, this leads to a similar scenario where $\epsilon=0$, i.e., no participation; hence, $S(p;\epsilon) = 0$.
\end{enumerate}

\textbf{Supporting Agent's Utility} Each agent $a_n$ of type $\epsilon_n$ responds strategically over the offered reward $p_{a_n}$ to minimize costs on data privacy, for the privacy budget $\epsilon$ asked by the central agent, and the information leakage due to data similarity. Denote general privacy cost function for employing $\epsilon-$differentially private mechanism as $c_n(C_n, \epsilon): \mathbb{R}_+ \rightarrow \mathbb{R}_+$, where $c_n(\cdot)$ models the cost of participation employing a privacy factor of $\epsilon$ with the marginal participation cost parameter $C_n$ for agent $a_n$. We use a linear-cost model to simplify our analysis to define the agent's privacy cost, i.e., $c_n(C_n, \epsilon) = C_n\epsilon$. Hereafter, consider the shorthand $c_n$ to indicate this parameterized privacy cost. As indicated before and validated experimentally (refer to Example 3), the supporting agents experience utility loss under information leakage. We define $\mathbb{E} [\mathcal{I}(q_n,\rho_n)$ that quantifies the information leakage following supporting agents' participation with similar data. As per the analysis made with data acquisition models in \cite{cummings2015truthful, cummings2023optimal}, the measure of information leakage increases with the number of high-quality data samples in the regression market. Taking this as a reference, we modelled $\mathbb{E} [\mathcal{I}(q_n,\rho_n)]$ as strictly increasing with the number of active agents in the regression market. Considering this, the individual data owner aims to minimize the information leakage during data trading and tune privacy factor $\epsilon_n$ over the offered pricing $p_{a_n}$ for maximal benefit of participation in the data market:
\begin{equation}
    u_n(\epsilon_n, p_{a_n}) = \gamma V_n({q_n}, p_{a_n}) - \epsilon_n \mathbb{E} [
    \mathcal{I}(q_n,\rho_n; \epsilon)] - \psi_n c_n, \label{eq:sa_utility}
\end{equation}
where $V_n({q_n}, p_{a_n})$ is the valuation\footnote{For ease, we model it as a linear function of asked privacy factor for the offered pricing.} of agent $n$ on participation for the offered pricing $p_n$, $\gamma > 0$ captures the participation preference of supporting agents, wherein a larger $\gamma$ implies a higher valuation on participation, and $\mathbb{E}[\mathcal{I}(q_n,\rho_n)]$ is a strictly increasing that captures information leakage due to data similarity with added privacy cost on participation $c_n$; as discussed, we model it with the number of active agents $N(q_n):= \sum\nolimits_{n\in\mathcal{A}}\mathbb{1}_{q_n=1}$ as $\varphi_n \log(1+N(q_n))$, and $\varphi_n$ and $\psi_n$ are weight parameters on information leakage and local privacy cost for agent $n$, respectively. We further exemplify information leakage with the following numerical example and discuss the utility model design.

%Fig2
\begin{figure*}[t!]
    \centering
\begin{tikzpicture}[scale=0.9]

\definecolor{darkgray176}{RGB}{176,176,176}

\begin{groupplot}[group style={group size=3 by 1, horizontal sep=90pt}, height=5cm,width=5cm]
\nextgroupplot[
colorbar,
colorbar style={ylabel={}},
colormap={mymap}{[1pt]
  rgb(0pt)=(0.403921568627451,0,0.12156862745098);
  rgb(1pt)=(0.698039215686274,0.0941176470588235,0.168627450980392);
  rgb(2pt)=(0.83921568627451,0.376470588235294,0.301960784313725);
  rgb(3pt)=(0.956862745098039,0.647058823529412,0.509803921568627);
  rgb(4pt)=(0.992156862745098,0.858823529411765,0.780392156862745);
  rgb(5pt)=(0.968627450980392,0.968627450980392,0.968627450980392);
  rgb(6pt)=(0.819607843137255,0.898039215686275,0.941176470588235);
  rgb(7pt)=(0.572549019607843,0.772549019607843,0.870588235294118);
  rgb(8pt)=(0.262745098039216,0.576470588235294,0.764705882352941);
  rgb(9pt)=(0.129411764705882,0.4,0.674509803921569);
  rgb(10pt)=(0.0196078431372549,0.188235294117647,0.380392156862745)
},
point meta max=0.890276010206599,
point meta min=0.0766371734112003,
tick align=outside,
tick pos=left,
x grid style={darkgray176},
xlabel={\(\displaystyle \rho_{3,4}\)},
xmin=0, xmax=11,
xtick style={color=black},
xtick={0.5,2.5,4.5,6.5,8.5,10.5},
xticklabels={0,0.2,0.4,0.6,0.8,1},
y dir=reverse,
y grid style={darkgray176},
ylabel={\(\displaystyle \sigma_4\)},
ymin=0, ymax=11,
ytick style={color=black},
ytick={10.5, 9.5, 8.5, 7.5, 6.5, 5.5, 4.5, 3.5, 2.5, 1.5, 0.5},
yticklabel style={rotate=0.0},
yticklabels={0,0.1,0.2,0.3,0.4,0.5,0.6,0.7,0.8,0.9,1}
]
\addplot graphics [includegraphics cmd=\pgfimage,xmin=0, xmax=11, ymin=11, ymax=0] {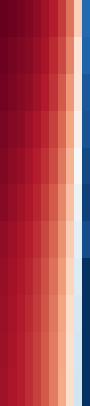};

\nextgroupplot[
colorbar,
colorbar style={ylabel={}},
colormap={mymap}{[1pt]
  rgb(0pt)=(0.403921568627451,0,0.12156862745098);
  rgb(1pt)=(0.698039215686274,0.0941176470588235,0.168627450980392);
  rgb(2pt)=(0.83921568627451,0.376470588235294,0.301960784313725);
  rgb(3pt)=(0.956862745098039,0.647058823529412,0.509803921568627);
  rgb(4pt)=(0.992156862745098,0.858823529411765,0.780392156862745);
  rgb(5pt)=(0.968627450980392,0.968627450980392,0.968627450980392);
  rgb(6pt)=(0.819607843137255,0.898039215686275,0.941176470588235);
  rgb(7pt)=(0.572549019607843,0.772549019607843,0.870588235294118);
  rgb(8pt)=(0.262745098039216,0.576470588235294,0.764705882352941);
  rgb(9pt)=(0.129411764705882,0.4,0.674509803921569);
  rgb(10pt)=(0.0196078431372549,0.188235294117647,0.380392156862745)
},
point meta max=0.353574061346551,
point meta min=0.0273363679977309,
tick align=outside,
tick pos=left,
x grid style={darkgray176},
xlabel={\(\displaystyle \rho_{3,4}\)},
xmin=0, xmax=11,
xtick style={color=black},
xtick={0.5,2.5,4.5,6.5,8.5,10.5},
xticklabels={0,0.2,0.4,0.6,0.8,1},
y dir=reverse,
y grid style={darkgray176},
ylabel={\(\displaystyle \sigma_4\)},
ymin=0, ymax=11,
ytick style={color=black},
ytick={10.5, 9.5, 8.5, 7.5, 6.5, 5.5, 4.5, 3.5, 2.5, 1.5, 0.5},
yticklabel style={rotate=0.0},
yticklabels={0,0.1,0.2,0.3,0.4,0.5,0.6,0.7,0.8,0.9,1}
]

\addplot graphics [includegraphics cmd=\pgfimage,xmin=0, xmax=11, ymin=11, ymax=0] {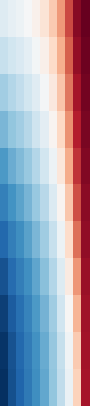};

\nextgroupplot[
colorbar,
colorbar style={ylabel={}},
colormap={mymap}{[1pt]
  rgb(0pt)=(0.403921568627451,0,0.12156862745098);
  rgb(1pt)=(0.698039215686274,0.0941176470588235,0.168627450980392);
  rgb(2pt)=(0.83921568627451,0.376470588235294,0.301960784313725);
  rgb(3pt)=(0.956862745098039,0.647058823529412,0.509803921568627);
  rgb(4pt)=(0.992156862745098,0.858823529411765,0.780392156862745);
  rgb(5pt)=(0.968627450980392,0.968627450980392,0.968627450980392);
  rgb(6pt)=(0.819607843137255,0.898039215686275,0.941176470588235);
  rgb(7pt)=(0.572549019607843,0.772549019607843,0.870588235294118);
  rgb(8pt)=(0.262745098039216,0.576470588235294,0.764705882352941);
  rgb(9pt)=(0.129411764705882,0.4,0.674509803921569);
  rgb(10pt)=(0.0196078431372549,0.188235294117647,0.380392156862745)
},
point meta max=0.516315158325074,
point meta min=0.0280186920989197,
tick align=outside,
tick pos=left,
x grid style={darkgray176},
xlabel={\(\displaystyle \rho_{3,4}\)},
xmin=0, xmax=11,
xtick style={color=black},
xtick={0.5,2.5,4.5,6.5,8.5,10.5},
xticklabels={0,0.2,0.4,0.6,0.8,1},
y dir=reverse,
y grid style={darkgray176},
ylabel={\(\displaystyle \sigma_4\)},
ymin=0, ymax=11,
ytick style={color=black},
ytick={10.5, 9.5, 8.5, 7.5, 6.5, 5.5, 4.5, 3.5, 2.5, 1.5, 0.5},
yticklabel style={rotate=0},
yticklabels={0,0.1,0.2,0.3,0.4,0.5,0.6,0.7,0.8,0.9,1}
]
\addplot graphics [includegraphics cmd=\pgfimage,xmin=0, xmax=11, ymin=11, ymax=0] {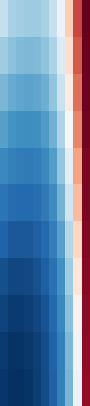};
\end{groupplot}
\node (agent 1)      at ( 1.5,-1.5) {Agent $\{a_2\}$}; 
\node (agent 2)      at ( 7.5,-1.5) {Agent $\{a_3\}$};
\node (agent 3)      at ( 13.5,-1.5) {Agent $\{a_4\}$};
\end{tikzpicture}
    \caption{Example scenario: heatmap represents the impact on the normalized contribution of each agent given information leakage due to data correlation $\rho_{3,4}$ between $\{a_3\} - \{a_4\}$ and the noise injection $\sigma_4$ by $\{a_4\}$.}
    \label{fig:contribution}
\end{figure*}
\textbf{Example 3}: In Fig.~\ref{fig:contribution}, we provide an example scenario to assess the impact of information leakage due to data similarity on the valuation of the supporting agents' data. We set supporting agents $\mathcal{A}\setminus{\{a_{1}\}}=\{\{a_2\},\{a_3\},\{a_4\}\}$ aiding the linear regression market initiated by the central agent $\{a_{1}\}$. The numerical evaluation follows the settings of \cite{pinson2022regression}. In addition, we consider agents $\{a_3\}$ and $\{a_4\}$ have correlated data samples denoted as $\rho_{3,4}$ while agent $\{a_4\}$ is injecting noise $\sigma_4$ of different magnitude. The central agent is solving the regression problem, as defined in \eqref{eq:model}, while the normalized contribution of each supporting agent is evaluated following Definition~\ref{def:SV}. The heatmap reflects the impact on the agent's normalized contribution within this setup. We begin by obtaining the marginal contribution of agent $\{a_4\}$ higher than the agent $\{a_3\}$. Note that the obtained numerical results could differ per the underlying model considered; however, the discussions and intuitions following this analysis are still valid. Next, in the corresponding heatmap scales, we observe the information leakage due to data similarity between agents $\{a_3\}$ and $\{a_4\}$ impacts the normalized contribution of both agents. \textit{First}, when the agent $\{a_4\}$ injects more noise $\sigma_4$, this lowers its \emph{own} share of contribution in improving the model accuracy. Then, following the considered setup, the contribution of $\{a_4\}$ is lowered when $\sigma_4$ is high but is still better than that of the agent $\{a_3\}$. \emph{Second}, with increased data correlation $\rho_{3,4}$, we observe a similar trend in the normalized contributions for each agent, which is intuitive considering the impact of information leakage due to data similarity. Furthermore, having a high degree of noise injection in the shared data and perfect data correlation would lead to having agent $\{a_2\}$ as the only contributor in solving the regression problem.

Next, we formalize our proposed mechanism design that allows privacy-aware data acquisition under data similarity in the regression market.

\subsection{Mechanism Design}\label{susbsec:mechanism}
In this subsection, we derive the optimal strategies for the participation of the supporting agents through their sequential interaction with the central agent over pricing. We build on the utility models designed in Sec.~\ref{subsec:utilitymodels} and formulate the two-stage game model of interaction under data similarity.

Recall the utility models defined in \eqref{eq:ca_utility} and \eqref{eq:sa_utility}. While it is true that agent $n$ accrues a utility $u_n(\epsilon_n, p_{a_n})$ only for a positive pricing signal, i.e., $p_{a_n} > 0$, such that
            \begin{equation}
              u_n(\epsilon_n, p_{a_n})= 
                \begin{cases}
                u_n(\epsilon_n, p_{a_n}), & \text{if $\epsilon_n \in \varepsilon$} \\
                -\infty, & \text{otherwise}. \
                \end{cases}
            \end{equation}
We remark $p_{a_n}>0$ is a necessary but not sufficient condition for agent $n$ to participate in the data trading.

We have two participation scenarios to characterize the utility function of the individual agent, as follows:
\begin{enumerate}[label=S-\Roman*.]
    \item We have $q_n = 1$ when $\epsilon_n(q_n) > \epsilon$, where
    \begin{equation}
        \epsilon_n(q_n) = \frac{\gamma V_n({q_n}, p_{a_n}) - \psi_n c_n}{ \mathbb{E} [
    \mathcal{I}(q_n,\sigma_n; \epsilon)]},
    \label{eq:response}
    \end{equation} 
    and $q_n = 0$, otherwise.
    \item The requirements of $\epsilon_n(q_n) > \epsilon$ restrict some of the supporting agents from participation in the regression market, primarily due to individual privacy budgets.   
\end{enumerate}
Then, considering scenarios S-I and S-II, the following results are derivations of the optimal participation response of the supporting agents with necessary definitions.
\begin{definition}[\textbf{Feasibility}]\label{def:feasibility}
The mechanism is feasible for the offered pricing signal $p$ if $\exists n \in\mathcal{A}: u_n(\epsilon_n, p_{a_n}) > 0$. Feasibility criteria can be satisfied as the central agent is aware of the distribution on privacy preference profiles $F_\varepsilon$ and stimulates interaction for the exchange of data samples considering the agent with the highest privacy budget $\epsilon_u$ and set $\epsilon=\epsilon_u$. 
\end{definition}
Following Definition~\ref{def:feasibility}, and the utility profile of the supporting agents, however, we cannot guarantee that the criterion for joining the regression market will be fulfilled for all available agents $\epsilon_n(q_n) > \epsilon$, i.e., we have
\begin{equation}
    q_n = \int_0^\infty q(\epsilon) dF_\varepsilon(\epsilon)
\end{equation}
that quantifies the joining fraction of supporting agents, in probability, in the market. This leads to 
\begin{equation}
    q_n = \int_0^{\epsilon_n(q_n)} dF_\varepsilon(\epsilon) = F_\varepsilon(\epsilon_n(q_n)).
\end{equation}
Therefore, we can formalize the participation probability of the supporting agent as follows.
\begin{definition}\label{def:nash}
Given the following condition satisfies, as
\begin{equation}
    q_n^* = F_\varepsilon(\epsilon_n(q_n^*)),
\end{equation}
we define $q_n^*$ as a Nash equilibrium of the supporting agent.
\end{definition}

\begin{lemma}\label{lemma:equilibrium}
Given the incurred cost of data exchanges $c_n$ in the regression market, with a shared value of instantaneous information leakage, there exists a unique Nash equilibrium $q_n^*$ defining the probability of supporting agents joining the collaborative training in the regression market. 
\end{lemma}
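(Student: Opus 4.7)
The plan is to recast the equilibrium condition $q_n^* = F_\varepsilon(\epsilon_n(q_n^*))$ as a fixed-point problem for a continuous self-map on the compact interval $[0,1]$, then obtain existence by a standard topological argument and uniqueness by a strict-monotonicity argument.

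First, I would introduce the self-map $T_n:[0,1]\to[0,1]$ defined by $T_n(q) := F_\varepsilon(\epsilon_n(q))$, with $\epsilon_n(q)$ as in \eqref{eq:response}. Since $F_\varepsilon$ is the uniform CDF on $[\epsilon_\textrm{l},\epsilon_\textrm{u}]$, its range is contained in $[0,1]$, so $T_n$ is well defined. Continuity follows because the numerator $\gamma V_n(q,p_{a_n}) - \psi_n c_n$ depends continuously on $q$ (with $V_n$ taken linear in the privacy factor for the offered pricing, as stipulated in Sec.~\ref{subsec:utilitymodels}) and the denominator $\mathbb{E}[\mathcal{I}(q,\rho_n;\epsilon)] = \varphi_n \log(1+N(q))$ is continuous and bounded away from zero once at least one agent is active. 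By Definition~\ref{def:nash}, any fixed point of $T_n$ is a Nash equilibrium of the supporting-agent game, and conversely.

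Second, I would establish existence via the intermediate value theorem (equivalently Brouwer on $[0,1]$) applied to $\Phi(q) := T_n(q) - q$: at $q=0$ we have $\Phi(0) = F_\varepsilon(\epsilon_n(0)) \ge 0$, and at $q=1$ we have $\Phi(1) \le 1 - 1 = 0$. Hence $\Phi$ has at least one zero in $[0,1]$, giving existence of $q_n^*$.

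For uniqueness, I would show that $\Phi$ is strictly decreasing, so its zero is unique. The key monotonicity observation is that under the lemma's hypothesis of a shared instantaneous information leakage, the numerator of \eqref{eq:response} is independent of $q$ (the pricing $p_{a_n}$ is fixed at the follower stage of the Stackelberg game, and $c_n = C_n \epsilon$ is also fixed), while the denominator $\varphi_n \log(1+N(q))$ is strictly increasing in $q$ through the strictly increasing count $N(q) = \sum_{n\in\mathcal{A}} \mathbb{1}_{q_n=1}$ of active agents. Therefore $\epsilon_n(q)$ is strictly decreasing in $q$, and composing with the non-decreasing CDF $F_\varepsilon$ makes $T_n$ non-increasing. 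Subtracting the identity then gives $\Phi$ strictly decreasing, so $q_n^*$ is the unique fixed point of $T_n$ in $[0,1]$.

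The principal obstacle is precisely the monotonicity step: without the hypothesis of a shared (exogenous) value of instantaneous information leakage, the numerator $\gamma V_n(q,p_{a_n}) - \psi_n c_n$ could itself grow with $q$, and one would have to bound its sensitivity in $q$ against the $\varphi_n/(1+N(q))$ growth of the denominator to preserve the strict monotonicity of $\epsilon_n(\cdot)$. Making the shared-leakage assumption explicit at the start of the argument is what cleanly decouples these two effects; the remainder is a routine compact-interval fixed-point analysis.
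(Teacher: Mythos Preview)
Your approach is essentially the paper's: both introduce the auxiliary function $\xi(q)=F_\varepsilon(\epsilon_n(q))-q$ (your $\Phi$), derive that $\epsilon_n(\cdot)$ is strictly decreasing from the strictly increasing leakage denominator in \eqref{eq:response}, and obtain existence and uniqueness of the root via a sign-change/IVT argument on $[0,1]$. The only substantive difference is that the paper splits the analysis into three explicit parameter regimes (pinning $q_n^*$ to $0$, to $1$, or to an interior root) and appends a separate no-profitable-deviation check to confirm the fixed point is a Nash equilibrium in the game-theoretic sense, whereas you short-circuit that step by invoking Definition~\ref{def:nash} directly.
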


\begin{proof}
We begin the proof for the uniqueness of the solution by defining a variable $\xi(q_n):= F_\varepsilon(\epsilon_n(q_n)) - q_n$. As $\mathbb{E}[\mathcal{I}(q_n,\rho_n)]$ is a strictly increasing in \eqref{eq:response}, consequently, we have $\epsilon_n(q_n)$ a strictly decreasing on its domain, and leading to $F_\varepsilon(\cdot)$ as an increasing function \cite{boyd2004convex}. Then, following the Definition~\ref{def:nash}, $\xi(q_n)$ should have a unique solution, i.e., a root, to guarantee $q^*_n$ is an equilibrium at the best response. Then, 
\begin{enumerate}
    \item if $V_n({q_n}, p_{a_n}) \le \bigg( \frac{\psi_n}{\gamma}\bigg)c_n$, we have $q_n^* = 0$, resulting in unique root of $\xi(q_n)$.  
    \item if $V_n({q_n}, p_{a_n}) \ge \frac{1}{\gamma}\bigg[\psi_nc_n + \epsilon_n\mathbb{E}[\mathcal{I}(q_n,\rho_n)\bigg]$ for any $\gamma > 0$, we have $q_n^* = 1$, resulting unique root of $\xi(q_n)$. 
    \item otherwise, if we have a region between (1) and (2), i.e., $c_n < V_n({q_n}, p_{a_n}) < \bigg(\frac{\gamma}{\psi_n}\bigg)\epsilon_n\mathbb{E}[\mathcal{I}(q_n,\rho_n)]$, there exists a unique root $q_n^* \in (q'_n, 1)$. This can be concluded based on the following observations. We also drop the normalizing constants hereafter for simplifying the analysis, as it won't influence the conclusion made. Choose arbitrary $q'_n \in (0,1)$, then there exists $c_n < V_n({q_n}, p_{a_n}) = c_n + \epsilon_n\mathbb{E}[\mathcal{I}(q_n,\rho_n)] < c_n + \epsilon_n \mathbb{E}[\mathcal{I}(1,\rho_n)]$ as $\mathbb{E}[\mathcal{I}(1,\rho_n)]$ is strictly increasing. Then, using the definition of $\xi(q_n)$, which is a continuous, decreasing function, we have $\xi(q_n) = 1 - q_n > 0, \forall q_n \in [0, q'_n]$ and $\xi(1) = F_\varepsilon(\epsilon_n(1))-1 < 0$.
\end{enumerate}
Next, we show that $q^*_n$ is the Nash equilibrium with the following observations. As such, the optimal strategy of supporting agents is to adopt their true privacy preference for incentives during data trading. From conditions in \eqref{eq:response}, this is straightforward as we have, 
\begin{enumerate}
    \item the expected utility $\tilde{V}_n$ of supporting agent $n$ is $\tilde{V}_n:= V_n({q^*_n}, p_{a_n}) - \frac{1}{\gamma}\bigg[\psi_nc_n + \epsilon_n \mathbb{E}[\mathcal{I}(q^*_n,\rho_n)\bigg] > 0$, if $q_n = 1$. For any other strategy $\tilde{q}_n \in [0,1)$, the expected utility is $\tilde{q}_n\bigg(V_n({q^*_n}, p_{a_n}) - \frac{1}{\gamma}\bigg[\psi_nc_n + \epsilon_n \mathbb{E}[\mathcal{I}(q^*_n,\rho_n)\bigg]\bigg) < \tilde{V}_n$.
    \item The converse is true, as deviating from optimal strategy $q_n = 0$ only lowers the expected utility of the supporting agent $\tilde{V}_n$, when $V_n({q^*_n}, p_{a_n}) < \frac{1}{\gamma}\bigg[\psi_nc_n + \epsilon_n \mathbb{E}[\mathcal{I}(q^*_n,\rho_n)\bigg]$.
    \item Finally, when $V_n({q^*_n},p_{a_n}) =\frac{1}{\gamma}\bigg[\psi_nc_n + \epsilon_n\mathbb{E}[\mathcal{I}(q^*_n,\rho_n)\bigg]$, any deviation in the supporting agent's strategy does not improve its expected utility, i.e., $u_n(\epsilon_n, p_{a_n})=0$; hence, the supporting agent has no incentive to deviate.
\end{enumerate}
This completes the proof.
\end{proof}

Following Lemma~\ref{lemma:equilibrium}, after meeting the participation criteria, the supporting agents aim to tune their participation strategies with a privacy factor that maximizes benefits, i.e., utility, over an offered pricing $p$ and an asked privacy factor of $\epsilon$ by the central agent. This means that the supporting agents derive the optimal response for maximizing their overall valuation with the pricing for the query made by the central agent by solving the following optimization problem.
	\begin{maxi!}[2]  
  	% maxi! = maximize
		{q_n(\epsilon)}                               % optimization variable
		{u_n(\epsilon, p_{a_n}| p )} {\label{opt:P1}}{\textbf{P1:}}		
		\addConstraint{V_n \ge \frac{1}{\gamma}\bigg[\psi_nc_n + \epsilon\mathbb{E}[\mathcal{I}(q_n,\rho_n)\bigg],\label{p1cons1:posutil}}	
		\addConstraint{\epsilon_n(q_n) > \epsilon \label{p1cons1:participation}},
		\addConstraint{p_n>0, \label{p2cons3:positive_pricing}}
	\end{maxi!}
where \eqref{p1cons1:posutil} ensures a positive return on participation and constraint \eqref{p1cons1:participation} satisfies the individual privacy budget. Consider $q_n(\tilde{\epsilon})$ is the solution of \textbf{P1}, then, we have $q_n^* = \min\{q_n(\tilde{\epsilon}), q^*_n\}$.

Following the solution to \textbf{P1}, i.e., the strategic participation, without loss of generality, we then formulate the overall regression market problem as follows:
	\begin{maxi!}[2]  
  	% maxi! = maximize
		{p, \epsilon}                               % optimization variable
		{S(p;\epsilon| \textbf{q}^*)} {\label{opt:P1}}{\textbf{P2:}}		\addConstraint{\sum\nolimits_{n}p_{a_n}\le p(\epsilon),  \label{p2cons1:budget}}	
		\addConstraint{\epsilon_n(q_n^*) > \epsilon > \epsilon_{\textrm{ref}}, \forall n \in \mathcal{A} \label{p2cons1:participation}},
		\addConstraint{p(\epsilon)>0, \label{p2cons3:positive_pricing}}
		\addConstraint{L(\zeta) > \zeta_{\textrm{ref}}},
	\end{maxi!}
where $\textbf{q}^*$ is a vector with the best response strategies of the supporting agents over the offered pricing signal and the privacy requirements set by the regression market; constraint \eqref{p2cons1:participation} restricts the asked privacy guarantees to a reference $\epsilon_{\textrm{ref}}$ value. While it is in the best interest of the central agent to keep the value of $\epsilon$ as large to improve participation with less perturbed explanatory data samples in the regression market, it is mostly impractical. This is due to the influence of heterogeneous privacy preferences. In principle, $ \epsilon_{\textrm{ref}}$ is set as $\max\{\epsilon_n\}, \forall n \in \mathcal{A}$ in each iteration of interaction to satisfy participation constraint $\eqref{p2cons1:participation}$, while satisfying the price allocation under budget constraints \eqref{p2cons1:budget}.  \begin{lemma}\label{lemma:optimal_response}
The optimal solution for \textbf{P2} $\epsilon^*$ for the known pricing budget at the central agent can be derived as $\max\{\epsilon_n(q_n^*), \epsilon_{\textrm{ref}}\},$ where the asked privacy guarantees are set by the central agent to a reference $\epsilon_{\textrm{ref}}$ value. 
\end{lemma}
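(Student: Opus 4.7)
The plan is to proceed by backward induction on the two-stage Stackelberg game. By Lemma~\ref{lemma:equilibrium}, every pair $(p,\epsilon)$ posted by the central agent induces a unique Nash equilibrium profile $\textbf{q}^*$ among the supporting agents, with best-response thresholds $\epsilon_n(q_n^*)$ obtained from \eqref{eq:response}. Substituting this vector into the leader's utility $S(p;\epsilon|\textbf{q}^*)$ from \eqref{eq:ca_utility} reduces \textbf{P2} to a constrained program in $(p,\epsilon)$ alone, which I would solve essentially in closed form using the concavity structure already established for the central agent's valuation.

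The first step is to establish that, for any fixed $p>0$, the reduced objective is strictly increasing in $\epsilon$ on every open subinterval not containing an agent's threshold. This follows from $\beta\in(-1,0)$, which makes $L(\zeta)\ln[\alpha\epsilon p+1]^{-\beta}$ strictly increasing and concave in $\epsilon$, while the cost term $p\sum_n\mathbb{1}_{\epsilon_n(q_n^*)>\epsilon}$ is piecewise constant and only jumps downward in magnitude (hence upward in $S$) as $\epsilon$ crosses a threshold and an agent exits. Consequently, the maximizer of $S$ over the feasible set \eqref{p2cons1:budget}--\eqref{p2cons3:positive_pricing} together with \eqref{p2cons1:participation} must lie in the finite candidate set $\{\epsilon_n(q_n^*)\}_{n\in\mathcal{A}}\cup\{\epsilon_{\textrm{ref}}\}$ rather than in the interior of any piece.

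The second step is to pick the correct candidate. Constraint \eqref{p2cons1:participation} imposes both a lower bound $\epsilon>\epsilon_{\textrm{ref}}$ (to preserve the reference accuracy tied to Remark~2 and Definition~\ref{def:SV}) and the per-agent upper bound $\epsilon<\epsilon_n(q_n^*)$ (to retain that agent's participation). Combining these with the observation from the paragraph preceding the lemma that $\epsilon_{\textrm{ref}}$ is itself set from the agents' equilibrium thresholds, the tightest admissible level satisfying both sides simultaneously is $\epsilon^*=\max\{\epsilon_n(q_n^*),\epsilon_{\textrm{ref}}\}$. I would then close the argument by checking that the corresponding price $p^*$ obtained from \eqref{p2cons1:budget} together with \eqref{p2cons3:positive_pricing} still satisfies the accuracy requirement in the final constraint, using the Shapley-based contribution measure of Definition~\ref{def:SV}.

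The main obstacle is the discontinuity introduced by the participation indicator, which rules out direct KKT reasoning and forces an exchange-style argument at each threshold: when $\epsilon$ crosses $\epsilon_n(q_n^*)$ from below, agent $n$ drops out, which simultaneously reduces the aggregated cost term and shrinks $L(\zeta)$ through that agent's marginal contribution in Definition~\ref{def:SV}. I would handle this trade-off by comparing the left- and right-limits of $S$ at each candidate threshold and showing that, at $\epsilon^*=\max\{\epsilon_n(q_n^*),\epsilon_{\textrm{ref}}\}$, the marginal valuation gain from relaxing privacy exactly balances the accuracy loss from the exiting agent, so neither raising nor lowering $\epsilon$ is beneficial for the leader given the followers' equilibrium response guaranteed by Lemma~\ref{lemma:equilibrium}.
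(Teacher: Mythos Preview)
Your core idea---that for fixed $p$ the valuation term is monotone increasing in $\epsilon$ so the optimizer sits on the boundary of the feasible interval in \eqref{p2cons1:participation}---is exactly what the paper invokes, but the paper's proof is a two-line remark: it simply says the optimum ``follows the characteristics of the constraints, leading the optimal solution to be the boundary conditions for a fixed offered pricing $p(\epsilon)>0$,'' and notes that the budget constraint \eqref{p2cons1:budget} is handled by proportional allocation. There is no threshold-by-threshold exchange argument, no left/right-limit comparison, and no explicit appeal to Lemma~\ref{lemma:equilibrium} beyond taking $\textbf{q}^*$ as given.

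Your elaboration introduces an unnecessary complication. The participation constraint \eqref{p2cons1:participation} requires $\epsilon<\epsilon_n(q_n^*)$ for \emph{every} $n\in\mathcal{A}$, so within the feasible region of \textbf{P2} no agent ever exits and the indicator sum in \eqref{eq:ca_utility} is constant. The whole ``main obstacle'' paragraph---the discontinuity when $\epsilon$ crosses a threshold, the drop in $L(\zeta)$ via Definition~\ref{def:SV}, the balancing of marginal gains and losses---concerns points outside the feasible set and is irrelevant to the constrained problem as posed. Once you restrict to the feasible interval, the objective is smooth and strictly increasing in $\epsilon$, and the boundary argument is immediate.

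There is also a slip in your Step~2: from $\epsilon>\epsilon_{\textrm{ref}}$ and $\epsilon<\epsilon_n(q_n^*)$ for all $n$ you get the feasible interval $(\epsilon_{\textrm{ref}},\min_n\epsilon_n(q_n^*))$, so monotonicity would push $\epsilon$ toward the \emph{upper} endpoint $\min_n\epsilon_n(q_n^*)$, not toward $\max_n\epsilon_n(q_n^*)$. The paper's stated formula $\max\{\epsilon_n(q_n^*),\epsilon_{\textrm{ref}}\}$ is reconciled only through the remark immediately preceding the lemma that $\epsilon_{\textrm{ref}}$ is itself reset to $\max\{\epsilon_n\}$ at each iteration; you gesture at this but do not actually carry the identification through, so your derivation of the specific form $\max\{\cdot,\cdot\}$ does not close.
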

\begin{proof}
The proof follows the characteristics of the constraints, leading the optimal solution to be the boundary conditions for a fixed offered pricing $p(\epsilon)>0$. Note that the differential price allocation constraint on the available monetary budget, $\sum\nolimits_{n}p_{a_n}\leq p(\epsilon)$ is evaluated as per the proportional contribution measure of the individual supporting agent, as in \cite{tun2019wireless, pandey2023strategic}. 
\end{proof}

  	\begin{algorithm}[t!]
        	\caption{\strut First-order Iterative Backward Induction}
        	\label{alg:stackelberggame}
        	\begin{algorithmic}[1]
        		\STATE{Start with random sample $\epsilon_{\textrm{ref}} < \epsilon \sim f_\varepsilon$, offered pricing signal $p= p_{\max}$, normalizing parameters $\gamma, \{ \psi_n, \varphi_n\},\forall n$ set to 1, $\zeta_{\textrm{ref}}=0.9$}.
        		\STATE{$\mathcal{P} = \{\}, \mathcal{A}\leftarrow{\{n | q_n = 1, \forall n\}}$};
        		 \REPEAT
        		  \STATE{$\mathcal{R} = \mathcal{A}$};
        		   \STATE {Evaluate the performance improvement $L(\zeta)$};  
        		   \STATE Solve the following optimization problem:\\
        		   $$\underset{p> \tilde{p}}{\textrm{maximize}}\; L(\zeta)\frac{1}{{\ln[\alpha \epsilon^*p(\epsilon^*) + 1]}^\beta} - p(\epsilon^*) \sum\nolimits_{n \in \mathcal{A}} q_n(\epsilon^*);$$
        		    %\STATE {Sort agents per their contributions};
        		    \FORALL{agents $n\in \mathcal{A}$}
                    \STATE {Invoke proportionally fair price allocation $p_{a_n}$ with the marginal contribution \eqref{eq:shapTMC}}; 
                    \STATE {$\mathcal{P}\leftarrow{\mathcal{P}}+ \{p_{a_n}\}$}
                    \STATE {$\mathcal{R} = \mathcal{A}\setminus n$};
                    \ENDFOR
            	     \STATE {Set $\epsilon = \max\{\epsilon_n(q_n^*), \epsilon_{\textrm{ref}}\}, \tilde{p}=\sup\mathcal{P}$};         
        	     \UNTIL{$\mathcal{A}= \{\emptyset \}$};
        	\end{algorithmic}
        	\label{Algorithm}
        \end{algorithm}

For the pricing signal $p$ and the asked privacy budget $\epsilon$, following \textbf{P1} and problem \textbf{P2}, leads the regression market problem a two-staged leader-follower game, where the market aims to receive high-quality explanatory data samples for the feasible pricing signal to the agents who strategically response to the query made for participation in a non-cooperative setting. 

Following Lemma~\ref{lemma:optimal_response} results in the following overall utility maximization problem:  
	\begin{maxi!}[2]  
  	% maxi! = maximize
		{p>0, \mathbf{q}}                               % optimization variable
		{\frac{L(\zeta)}{{\ln[\alpha \epsilon^*p(\epsilon^*) + 1]}^\beta}- 
		\hspace{0.1cm} p(\epsilon^*) \sum\nolimits_{n \in \mathcal{A}}q_n \epsilon^*(q_n)} {\label{opt:P3}}{\textbf{P3:}}
		\addConstraint{L(\zeta) > \zeta_{\textrm{ref}}}
		\addConstraint{q_n \in \{0,1\}, \forall n \in \mathcal{A},}
	\end{maxi!} 
	
The solution to the optimization problem \textbf{P3} is non-trivial, first, given the participation constraints following response to feasibility (Definition~\ref{def:feasibility}), the influence of pricing on participation, and its overall consequence on the performance improvement factor $L(\zeta)$. Second, with a possible $2^{|\mathcal{A}|}$ configurations, it might require exponential-complexity effort to solve the problem with an exhaustive search solution. We propose a low-complexity solution to address this.

We develop a first-order iterative solution (Algorithm~\ref{alg:stackelberggame}) that aims to satisfy the conditions for the Nash solution and builds on top of the backward induction method to reach the Stackelberg equilibrium. The proposed method satisfies the following economic properties.

\emph{Incentive Compatibility}: The mechanism is incentive compatible given if all supporting agents behave rationally according to their local privacy preference, i.e., their true type, such that $\mathbb{E}[\epsilon_n, u_n(p_{a_n})] \ge \mathbb{E}[\epsilon, u_n(p_{a_n})], \forall \epsilon, \forall n \in \mathcal{A}$.

\emph{Individual Rationality}: For each supporting agent, there exists a non-negative utility $u_n(\epsilon_n, p_{a_n}) \ge 0, \forall n \in \mathcal{A}$, if they respond with their true type. This is the participation constraint, which is observed following Lemma~\ref{lemma:equilibrium}. The supporting agents always opt for their true data type, ensuring a positive utility.

Besides the fundamental properties of our mechanism, its computational properties are covered through the following theorem.
\begin{theorem} The first-order Algorithm~1 solves the overall utility maximization problem \textbf{P3} with linear complexity.
\end{theorem}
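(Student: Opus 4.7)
The plan is to prove the theorem in two parts: \emph{correctness} (Algorithm~1 returns the Stackelberg equilibrium of \textbf{P3}) and \emph{complexity} (the number of elementary operations scales linearly in $|\mathcal{A}|$). The backbone is backward induction on the two-stage game already set up in Sec.~\ref{susbsec:mechanism}.

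For correctness, I would argue as follows. At the follower level, Lemma~\ref{lemma:equilibrium} gives, for any announced $(p,\epsilon)$, a unique best response $q_n^\star=F_\varepsilon(\epsilon_n(q_n^\star))$, so the response vector $\mathbf{q}^\star$ invoked in \textbf{P2} is well defined. At the leader level, Lemma~\ref{lemma:optimal_response} supplies the closed-form update $\epsilon^\star=\max\{\epsilon_n(q_n^\star),\epsilon_{\textrm{ref}}\}$ used in Step~12, and the concavity of $S(p;\epsilon)$ in $p$ (noted right after Definition~\ref{def:learner_utility}) turns Step~6 into a one-dimensional convex program with a unique maximizer identified by its first-order condition. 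I would then show that after one outer sweep the tuple $(p,\epsilon,\mathbf{q})$ either satisfies the fixed-point conditions of \textbf{P3} simultaneously, or at least one agent in $\mathcal{A}$ fails \eqref{p2cons1:participation} under the updated $\epsilon$ and is dropped. Incentive compatibility and individual rationality, stated just before the theorem, ensure that no dropped agent has incentive to re-enter, so the sequence of active sets is strictly monotone and the loop terminates exactly at the Stackelberg equilibrium of the remaining participants.

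For the complexity bound, the key observation is that the outer \textbf{repeat}/\textbf{until} loop runs at most $|\mathcal{A}|$ times, since every non-terminating iteration strictly removes at least one agent. Inside a single iteration, Step~6 solves a one-dimensional concave program in $O(1)$ first-order steps; Steps~7--11 visit each surviving agent exactly once to assign $p_{a_n}$ and update $\mathcal{P}$; and Step~12 is a single $\max$ over the remaining $\epsilon_n$. Charging each agent to the unique iteration in which it is removed from $\mathcal{A}$ yields an amortized accounting in which the total work across the run is proportional to $|\mathcal{A}|$, giving the claimed linear complexity.

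The step I expect to require the most care is handling the Shapley-based marginal contribution in Step~8, whose naive form \eqref{eq:shapTMC} is exponential in $|\mathcal{A}|$ and would break the linear bound. I would address this by noting that the proportionally fair price allocation used here (following \cite{tun2019wireless, pandey2023strategic}) requires only the leave-one-out contributions $V(\mathcal{A})-V(\mathcal{A}\setminus\{n\})$ rather than the full permutation-averaged Shapley value; these are obtained from $|\mathcal{A}|+1$ evaluations of the quadratic loss in \eqref{eq:model}, each linear in the number of observations and independent of the combinatorial $|\mathcal{A}|!$ factor. Making this reduction explicit, and verifying that it is consistent with the marginal-contribution interpretation adopted throughout Sec.~\ref{subsec:utilitymodels}, is the technical hinge on which the linear-complexity claim rests.
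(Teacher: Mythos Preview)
Your approach is consistent with the paper's, which is essentially a one-line sketch: it invokes the concavity of the inner maximization in Step~6 together with the best-response update $\epsilon=\max\{\epsilon_n(q_n^\star),\epsilon_{\textrm{ref}}\}$ in Step~12 to argue that \textbf{P3} collapses to a single-variable optimization started at the maximal price. Your proposal follows the same backward-induction skeleton (Lemma~\ref{lemma:equilibrium} for followers, Lemma~\ref{lemma:optimal_response} plus concavity for the leader), so on correctness you are aligned with the paper, only considerably more explicit.

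Where you go further is the complexity argument. The paper does not spell out any amortized accounting; it simply asserts linearity after noting the single-variable reduction. Your monotone-active-set argument (each non-terminating outer iteration drops at least one agent, so at most $|\mathcal{A}|$ sweeps) is a reasonable way to substantiate what the paper leaves implicit. Be aware, though, that Algorithm~\ref{alg:stackelberggame} as written never explicitly shrinks $\mathcal{A}$ inside the loop; you are reading the participation constraint \eqref{p2cons1:participation} as the mechanism that prunes agents, which is a fair interpretation but should be stated as such rather than treated as a property of the pseudocode.

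Your final paragraph on the Shapley step is the most substantive addition and also the most delicate. The paper's proof does not address the cost of Step~8 at all, and Definition~\ref{def:SV} is indeed the full permutation-averaged Shapley value, which is not linear. Your proposed fix---replacing it by leave-one-out marginals for the proportional allocation---is plausible and in the spirit of the cited allocation rule, but it is \emph{your} assumption, not something the paper establishes. If you keep this resolution, flag it clearly as an additional modeling choice needed to make the linear-complexity claim hold; otherwise the theorem as stated (with \eqref{eq:shapTMC} taken literally) does not follow.
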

\begin{proof}
The proof follows the convexity property of the maximization problem in (line 6) with the responses of supporting agents to the asked privacy factor $\epsilon = \max\{\epsilon_n(q_n^*), \epsilon_{\textrm{ref}}\}$ (line 12), that eventually reduces the problem into a single variable optimization with the initial maximum offered pricing. 
\end{proof}
With this, we now have all the ingredients to characterize the \emph{Stackelberg equilibrium}. Obtaining the solution of \eqref{opt:P3} $p^*(\epsilon)$, involving best-responses of supporting agents with Definition~\ref{def:nash}, we have the following proposition.
\begin{proposition}
For any values of $p$ and $\epsilon$, we have the Stackelberg equilibrium if the following conditions are satisfied:
\begin{equation}
S(p^*, \boldsymbol{\epsilon^*} ) \ge S(p, \boldsymbol{\epsilon^*})
\end{equation}
\begin{equation}
 u_n(\epsilon_n(q^*_n), p^*) \ge u_n(\epsilon_n(q_n), p^*), \forall n \in \mathcal{A}.
\end{equation}
\end{proposition}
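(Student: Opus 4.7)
The plan is to verify that the pair $(p^*, \boldsymbol{\epsilon^*})$ satisfies the standard backward-induction characterization of a Stackelberg equilibrium in the single-leader multiple-follower game of Section~\ref{subsec:utilitymodels}. This reduces to two checks: (a) $\boldsymbol{\epsilon^*}$ is a Nash equilibrium of the follower subgame induced by $p^*$, and (b) $p^*$ maximizes the leader's anticipated utility given that follower response. I would show that the two inequalities in the statement deliver exactly (a) and (b) respectively, so that any pair satisfying them qualifies as a Stackelberg equilibrium.

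For (a), I would fix the leader's action at $p^*$ and read the second inequality $u_n(\epsilon_n(q^*_n), p^*) \ge u_n(\epsilon_n(q_n), p^*)$ as the componentwise Nash best-response condition of Definition~\ref{def:nash}. Since this holds for every agent $n \in \mathcal{A}$ and every alternative participation strategy $q_n$, no follower benefits from a unilateral deviation, so $\boldsymbol{\epsilon^*}$ (equivalently the participation profile $\mathbf{q}^*$ via \eqref{eq:response}) is a Nash equilibrium of the follower subgame at $p^*$; by Lemma~\ref{lemma:equilibrium} it is moreover the unique such equilibrium. For (b), I would invoke this uniqueness to justify that, at the equilibrium point, the leader's backward-induction objective $S(p, \boldsymbol{\epsilon^*}(p))$ collapses to the stated form $S(p, \boldsymbol{\epsilon^*})$, and then use the first inequality $S(p^*, \boldsymbol{\epsilon^*}) \ge S(p, \boldsymbol{\epsilon^*})$ to conclude that $p^*$ is optimal among feasible prices, i.e., those satisfying the budget \eqref{p2cons1:budget}, the participation threshold \eqref{p2cons1:participation}, and the accuracy requirement $L(\zeta) > \zeta_{\textrm{ref}}$ of Problem~\textbf{P2}.

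The main subtlety is the standard concern that the leader, upon deviating from $p^*$, could in principle induce a different follower equilibrium $\boldsymbol{\epsilon^*}(p) \neq \boldsymbol{\epsilon^*}$, so that the first inequality with $\boldsymbol{\epsilon^*}$ held fixed might not a priori capture the full backward-induction comparison. I would address this by appealing to Lemma~\ref{lemma:optimal_response}, which pins the optimal privacy factor to the boundary $\epsilon^* = \max\{\epsilon_n(q_n^*), \epsilon_{\textrm{ref}}\}$, together with the ex-post proportional allocation of Problem~\textbf{P2} that ties each $p_{a_n}$ to the agent's marginal contribution \eqref{eq:shapTMC}. These structural properties reduce the leader's relevant deviation space to a one-dimensional comparison in $p$ at the fixed equilibrium privacy profile, so that the first inequality is indeed the full leader-optimality condition. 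Combining (a) and (b) then yields the Stackelberg equilibrium, completing the proof.
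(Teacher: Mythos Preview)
The paper does not supply an explicit proof of this proposition; it states the two inequalities as the defining conditions of the Stackelberg equilibrium, treating them as immediate from the construction of $p^*$ via Problem~\textbf{P3} together with the follower best responses of Definition~\ref{def:nash} and Lemma~\ref{lemma:equilibrium}. Your proposal is correct and fills in exactly the backward-induction verification the paper leaves implicit: the second inequality is the componentwise follower Nash condition at $p^*$, and the first is leader optimality given that response.

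The subtlety you flag---that a leader deviation in $p$ could in principle shift the induced follower profile away from $\boldsymbol{\epsilon^*}$, so that holding $\boldsymbol{\epsilon^*}$ fixed in the first inequality may not capture the full backward-induction comparison---is a legitimate concern for a rigorous treatment. Your resolution via Lemma~\ref{lemma:optimal_response} and the ex-post proportional allocation is reasonable and consistent with the paper's setup. The paper itself does not engage this point at all; it simply adopts the stated inequalities as the equilibrium characterization, so in this respect your argument is more careful than the source.
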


\section{Numerical Results}\label{sec:simulations}
This section introduces the evaluation results for the proposed framework. We begin with the evaluation setup, where we show the underlying model used in the regression task. Then, we provide a performance evaluation following an analysis of the pricing, strategic participation, and comparison under the impact of data similarity. 

\textbf{Setup:} We consider a plain regression learning problem. For this, we generate the data to match the setup in which four agents, where $\{a_{1}\}$ is a central agent posing the online regression problem, as in \cite{pinson2022regression}, and  $\{a_2, a_3, a_4\}$ are the supporting ones supplying contributing features. The agents use distinct features sampled from the Gaussian distribution with unit variance. Central agent $\{a_{1}\}$ own feature $x_1$, while the supporting agents $\{a_2, a_3, a_4\}$ hold relevant features $x_2$, $x_3$, and $x_4$, respectively, at each time step. We particularly consider a single-order regression model such as $y_t = \theta_0 + \theta_{1,t}x_{1,t} +\theta_{2,t}x_{2,t} + \theta_{3,t}x_{3,t} + \theta_{4,t}x_{4,t} + \beta_t$, where the last term $\beta_t$ is Gaussian noise with zero mean and a finite variance of $0.3$, with quadratic loss, as in \eqref{eq:model}. Furthermore, to demonstrate the impact of participation due to data similarity and, further, the information leakage, we model features as correlated with each other through linear models, as in \cite{pandey2023strategic}. We first simulate the process using the true parameters as $\mathbf{\theta}^\top= [0.2\; 0.4\; -0.3\; -0.6\; 0.2]$. We follow the batch estimation process [see Section 2.3.2 in \cite{pinson2022regression}, equation (15)], gathering features for $\tau = 10000$ times, and later use this initialization for the online regression. The privacy budget is set as $\epsilon_{\textrm{ref}} = \ln{10}$. and the agents strategically employ $\epsilon-\textrm{LDP}$ following Definition~\ref{def:dp}, while the utility model uses parameters $\alpha=0.45$ and $\beta = -0.4$.
%Unless mentioned, we stress the system by setting a preference on privacy factor as $\ln 10$ at the participating agents.}

%Fig3
\input{figcumpay1}
\textbf{Analysis on offered pricing and loss estimates:} For evaluation, we consider the following two intuitive baselines: (i) \textbf{Case I}, which ignores the strategic participation of the agent allocates pricing as per the contribution made by the agents, (ii) \textbf{Case II}, which considers strategic participation and allocates pricing as per the contribution made by the agents. The rationale behind our choice of baselines is as follows. This work is an initial attempt to understand the dependency of strategic participation, particularly with the impact of information leakage due to data similarity and heterogeneity in privacy preferences in a regression market. Therefore, our baselines are focused on establishing a comparative understanding of the validation and efficiency of our constructed model. Our method \textbf{Alg. 1} considers the strategic participation of devices with a proportional price allocation scheme as per their data contribution. In Fig.\ref{fig:cumpay}, we show the evolution of normalized payment to three agents with contributing features in the regression market established by agent $\{a_{1}\}$, where agent $\{a_{2}\}$ is sharing the poorest quality data (i.e., implementing extreme privacy measures) and agents $\{a_{1}\}, \{a_{2}\}$ have correlated data. We observe that in both cases, the agents are offered payments more than the asked price, with a variability of $5.6\%$ to $15.3\%$. This situation ensures the participation of the agents; however, they do not align their privacy budget accordingly as asked by the learner - leading to unintended consequences in the payment for the learner and its utility. Compared to Case I, in Case II, the payment is shared amongst agents as per their strategic participation, which influences individuation contributions.

In Fig.~\ref{fig:beta_online}, we see the temporal evolution of the model parameters as per the underlying model we have adopted. Correspondingly, Fig.~\ref{fig:loss_estimate_v1} validates the performance of the learned regression model for the online regression task with four agents. We consider three scenarios for the evaluation of loss estimates: (i) Central Info, where only the feature available at agent $\{a_{1}\}$ is provided; (ii) Partial Info, where only features of agents $\{a_{1}, a_2, a_3\}$ are solicited by the agent $\{a_{1}\}$; and (iii) Full Info, where all features are provided, i.e., full participation. It is intuitive that the temporal evolution of loss estimates with Partial Info is better than the Central Info, where the central agent is missing relevant features in keeping track of estimating the true model parameters. This results in poor loss estimates, which is intuitive.
%Fig4
\input{figbetaonline}
%Fig5 revised
\begin{figure}[t!]
    \centering
 % This file was created with tikzplotlib v0.10.1.
\begin{tikzpicture}

\definecolor{darkgray176}{RGB}{176,176,176}
\definecolor{darkorange25512714}{RGB}{255,127,14}
\definecolor{forestgreen4416044}{RGB}{44,160,44}
\definecolor{lightgray204}{RGB}{204,204,204}
\definecolor{steelblue31119180}{RGB}{31,119,180}

\pgfplotsset{scaled x ticks=false} 
\begin{axis}[
legend cell align={left},
legend style={
  fill opacity=0.8,
  draw opacity=1,
  text opacity=1,
  at={(0.03,0.97)},
  anchor=north west,
  draw=lightgray204
},
tick align=outside,
tick pos=left,
x grid style={darkgray176},
xlabel={Time ($t$)},
xmin=-498.9, xmax=9000,
xtick style={color=black},
%xtick={1000,2000,3000,4000,5000,6000,7000,8000,9000,10000},
%xtick labels={1e3,2e3,3e3,4e3,5e3,6e3,7e3,8e3,9e3,10e3}, %new
%xticklabels={10,20,30,40,50,60,70,80,90,100},
y grid style={darkgray176},
ylabel={Loss estimation},
ymin=-0.05, ymax=1.05,
ytick style={color=black}
]
\addplot [semithick, steelblue31119180, mark=asterisk, mark size=3, mark options={solid}]
table {%
1000 0.765817191462275
2000 0.808250887668889
3000 0.891528337964863
4000 0.873591392992582
5000 0.826920304035215
5999 1
6999 0.858162736473324
7999 0.852139055639866
8999 0.905823073085176
9999 0.803510448701511
};
\addlegendentry{Central Info}
\addplot [semithick, darkorange25512714, mark=+, mark size=3, mark options={solid}]
table {%
1000 0.404378098811254
2000 0.440035239128625
3000 0.450971082559675
4000 0.464727157594474
5000 0.41241422931754
5999 0.530528046313661
6999 0.412453156362654
7999 0.476915747088229
8999 0.496879588236749
9999 0.462929353341717
};
\addlegendentry{Partial Info}
\addplot [semithick, forestgreen4416044, mark=triangle*, mark size=3, mark options={solid}]
table {%
1000 1.70013897851681e-28
2000 4.11206349520051e-29
3000 2.89814692738643e-29
4000 3.65545581464523e-29
5000 2.00107021226237e-29
5999 1.03757871998611e-28
6999 1.94334258008542e-29
7999 7.5370525894744e-29
8999 1.66596628237435e-29
9999 6.93297867736758e-29
};
\addlegendentry{Full Info}
\end{axis}
\end{tikzpicture}
\caption{Impact of participation on the normalized loss estimates for different collaborative online learning scenarios with four agents: (i) Central Info, with only agent $\{a_{1}\}$, (ii) Partial Info, with agents $\{a_{1}, a_2, a_3\}$, and (iii) Full Info, with all agents.}
\label{fig:loss_estimate_v1}
\end{figure}
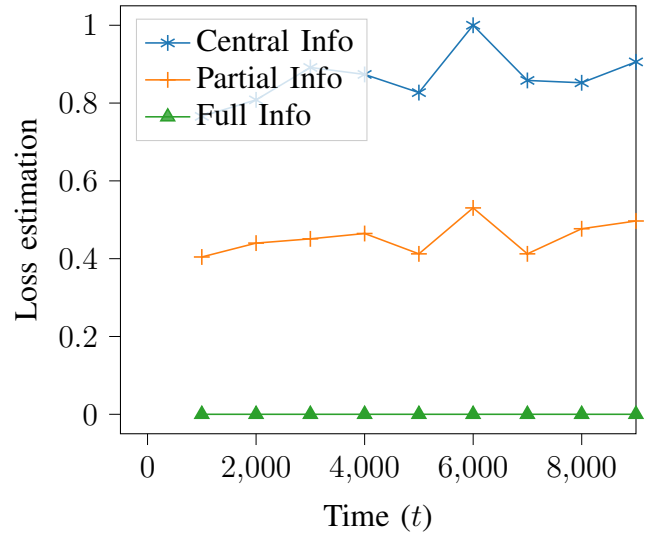

%Fig5
% \input{Strategic Participation in Online Regression Markets/fig_loss_estimates_v1}
%\input{Strategic Participation in Online Regression Markets/fig_bet_loss}
%Fig6 revised
\input{figcorrectedsamplev1}
\textbf{Impact of data similarity and privacy factor:} In Fig.\ref{fig:dynamic}, we analyze the influence of strategic participation on data valuation and normalized payments in the regression market. Agent $\{a_{4}\}$ has intermittent connectivity and leaves the regression market after a few iterations. Following \textbf{Case I}, the Shapley valuation is obtained based on all estimation subsets and allocated to the agents. This results in higher payment (up to $49\%$) for even those agents, i.e., $\{a_{4}\}$, particularly without contributing, as compared with \textbf{Case II}. \textbf{Case II} accounts for strategic participation but still allocates payment following the recursive nature of the contribution evaluation procedure. Alg. 1 account for the participation variable in evaluating the performance improvement, followed by the price allocation. Hence, the contribution made by the remaining agents is only considered, offering a gain of two factors. 

%Fig6
% \input{Strategic Participation in Online Regression Markets/fig_sample_v1}
\begin{figure}[t!]
    \centering
    \resizebox{0.45\textwidth}{!}{
    \begin{tikzpicture}

\definecolor{darkgray176}{RGB}{176,176,176}
\definecolor{mediumpurple}{RGB}{147,112,219}
\definecolor{orange}{RGB}{255,165,0}
\definecolor{steelblue}{RGB}{70,130,180}

\begin{axis}[
tick align=outside,
tick pos=left,
x grid style={darkgray176},
xlabel={},
xmin=-0.54, xmax=2.54,
xtick style={color=black},
xtick={0,1,2},
xticklabels={Alg. 1,Case II,Case I},
y grid style={darkgray176},
ylabel={Overall market utility},
ymin=-1.31783414486321, ymax=1.34467811974307,
ytick style={color=black}
]
\draw[draw=orange,fill=orange] (axis cs:-0.4,0) rectangle (axis cs:0.4,1.22365483498824);
\draw[draw=steelblue,fill=steelblue] (axis cs:0.6,0) rectangle (axis cs:1.4,-0.678650971791847);
\draw[draw=mediumpurple,fill=mediumpurple] (axis cs:1.6,0) rectangle (axis cs:2.4,-1.19681086010838);
\end{axis}
\end{tikzpicture}
    }
    \resizebox{0.45\textwidth}{!}{
    % This file was created with tikzplotlib v0.10.1.
\begin{tikzpicture}

\definecolor{darkgray176}{RGB}{176,176,176}
\definecolor{mediumpurple}{RGB}{147,112,219}
\definecolor{orange}{RGB}{255,165,0}
\definecolor{steelblue}{RGB}{70,130,180}

\begin{axis}[ 
tick align=outside,
tick pos=left,
x grid style={darkgray176},
xlabel={},
xmin=-0.54, xmax=2.54,
xtick style={color=black},
xtick={0,1,2},
xticklabels={Alg. 1,Case II,Case I},
y grid style={darkgray176},
ylabel={Overall market utility},
ymin=-0.656948098085382, ymax=0,
ytick style={color=black}
]
\draw[draw=orange,fill=orange] (axis cs:-0.4,0) rectangle (axis cs:0.4,-0.561023854511541);
\draw[draw=steelblue,fill=steelblue] (axis cs:0.6,0) rectangle (axis cs:1.4,-0.598650971791847);
\draw[draw=mediumpurple,fill=mediumpurple] (axis cs:1.6,0) rectangle (axis cs:2.4,-0.625664855319411);
\end{axis}

\end{tikzpicture}
    }
    \caption{Performance comparison of regression market's utility defined for the central agent while varying reference privacy factor: (top) $\epsilon_{\textrm{ref}} =\ln{10}$, and (bottom) $\epsilon_{\textrm{ref}} =\ln{60}$.}
    \label{fig:marketutility}
\end{figure}

Fig.~\ref{fig:marketutility} shows the performance gain in terms of the overall market utility compared to the baselines for different asked privacy budgets. We evaluate the mechanism for two extreme privacy budgets: $\epsilon_{\textrm{ref}} = \{\ln{10}, \ln{60}\}$, which allows a comparative performance evaluation of the proposed method against baselines participation strategies. First, the central agent relaxing privacy budget enforces supporting agents with better privacy preferences to opt-out from the market; hence, we observe a reduction in the market utility for $\epsilon_{\textrm{ref}} = \ln{60}$ in Fig.~\ref{fig:marketutility}. This follows our analysis for the drop in the loss estimates under such a scenario, as in Fig.~\ref{fig:loss_estimate_v1}. Second, we observed the naive scenario of Case I, which performed the worst of all, where the agents were not strategic in participating but were offered higher pricing. Our approach considers the heterogeneous privacy preferences of the supporting agents and offers pricing to ensure the supply of quality data. Consequently, this improves the overall market utility, as seen in Fig.~\ref{fig:marketutility} for $\epsilon_{\textrm{ref}} = \ln{10}$. This follows improved contributions of the supporting agents, considering the adjustment in privacy factor, as defined in Alg.~\ref{alg:stackelberggame}, line 12. 

Finally, in Fig.~\ref{fig:convergence}, we analyze the number of iterations required for the convergence of the proposed approach under the influence of different degrees of data similarity and against different privacy budgets. Following the original underlying model (as in Fig.~\ref{fig:contribution}), we vary the data similarity measure $\rho$ between the agents $\{a_3, a_4\}$. We observe with the increase in the privacy budget, i.e., lower sensitivity towards data privacy; the algorithm takes more iterations to converge following the tight privacy preference used for the agents. Furthermore, involving the participation of all agents, the results show the adversarial influence of data similarity on the convergence iterations. Statistical information leakage due to data similarity lowers the number of iterations where the correlated feature is less, contributing to the loss estimates. This supports the analysis made in Fig.~\ref{fig:contribution} regarding data contribution in solving the regression problem under data similarity. 
%Fig8
% This file was created with tikzplotlib v0.10.1.
\begin{figure}[t!]
    \centering
 \begin{tikzpicture}

\definecolor{darkgray176}{RGB}{176,176,176}
\definecolor{darkorange25512714}{RGB}{255,127,14}
\definecolor{forestgreen4416044}{RGB}{44,160,44}
\definecolor{lightgray204}{RGB}{204,204,204}
\definecolor{steelblue31119180}{RGB}{31,119,180}

\begin{axis}[
legend cell align={left},
legend style={
  fill opacity=0.8,
  draw opacity=1,
  text opacity=1,
  at={(0.91,0.5)},
  anchor=east,
  draw=lightgray204
},
tick align=outside,
tick pos=left,
x grid style={darkgray176},
xlabel={Privacy budget \(\displaystyle \epsilon_{\textrm{ref}}\) },
xmin=-0.05, xmax=0.7,
xtick style={color=black},
xtick={0.1,0.2,0.3,0.4,0.5,0.6},
xticklabels={$\ln10$,$\ln20$,$\ln30$,$\ln40$,$\ln50$,$\ln60$},
y grid style={darkgray176},
ylabel={Iterations},
ymin=3, ymax=36,
ytick style={color=black}
]
\addplot [semithick, steelblue31119180, mark=asterisk,mark size=3]
table {%
0.1 30
0.2 30
0.3 30
0.4 30
0.5 35
0.6 35
};
\addlegendentry{$\rho$ = 0}
\addplot [semithick, darkorange25512714, mark=+, mark size=3]
table {%
0.1 26
0.2 26
0.3 26
0.4 26
0.5 27
0.6 27
};
\addlegendentry{$\rho$ = 0.5}
\addplot [semithick, forestgreen4416044, mark=triangle*, mark size=3]
table {%
0.1 5
0.2 5
0.3 8
0.4 9
0.5 9
0.6 10
};
\addlegendentry{$\rho$ = 1.0}
\end{axis}

\end{tikzpicture}
    \caption{Convergence analysis (in terms of number of required iterations) at different privacy budgets under data similarity.}
    \label{fig:convergence}
\end{figure}
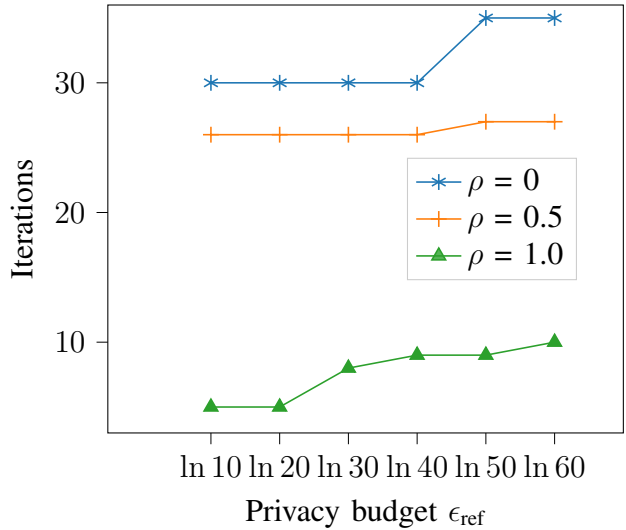
%Fig
%\input{Strategic Participation in Online Regression Markets/fig_sample}
\section{Conclusion}\label{sec:conclusion}
In this work, we analyzed the interplay between data pricing, privacy, and learning. In particular, we showed the impact of statistical information leakage (for instance, due to data similarity, e.g., data correlation) on the offered pricing and its influence on the value of traded data in a regression data market setting. We proposed a holistic market design where we account for such dependencies. Therein, we developed a query-response strategy for a leader-followers data acquisition mechanism that enjoys a local differential privacy technique, where participation in the trading of data samples happens between a number of data owners and the learner in an elastic fashion. We modelled the strategic interactions between the privacy-aware data owners and the learner as a Stackelberg game and evaluated the consequences of data similarity in terms of participation and the value of traded data on the online regression data market setup.

\bibliographystyle{ieeetr}
\bibliography{references}
	\vskip -2.2\baselineskip plus -1fil
	\begin{IEEEbiography}[{\includegraphics[width=1in,height=1.25in,clip,keepaspectratio]{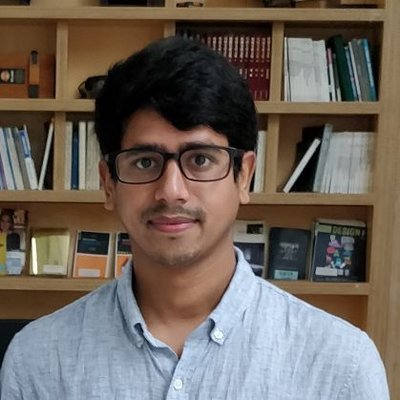}}]
		{\bf Shashi Raj Pandey} (Member, IEEE) received his  B.E. degree in Electrical and Electronics with a specialization in communication engineering from Kathmandu University, Nepal, and the Ph.D. degree in Computer Science and Engineering from Kyung Hee University, Seoul, South Korea. He is currently an Assistant Professor at the Department of Electronics Systems, at Aalborg University. Prior, he was a Postdoctoral Researcher at the Connectivity Section, Aalborg University, from 2021 to 2023. He served as a Network Engineer at Huawei Technologies Nepal Co. Pvt. Ltd, Nepal, from 2013 to 2016.  His research interests include network economics, game theory, wireless communications, data markets and distributed machine learning. He was the recipient of the Best Paper Award at several conferences, including IEICE APNOMS 2019. He was a Member at Large of the IEEE Communication Society Young Professionals 2020 -- 2021. He currently serves as a  Member at Large in the IEEE Communication Society On-Line Content Board and is on the editorial advisory board of IEEE's Spectrum The Institute 2022 -- 2024. He is an affiliated member of the Pioneer Center for AI, Denmark.
	\end{IEEEbiography}
	\vskip -2.2\baselineskip plus -1fil
	\begin{IEEEbiography}[{\includegraphics[width=1.2in,height=1.25in,clip,keepaspectratio]{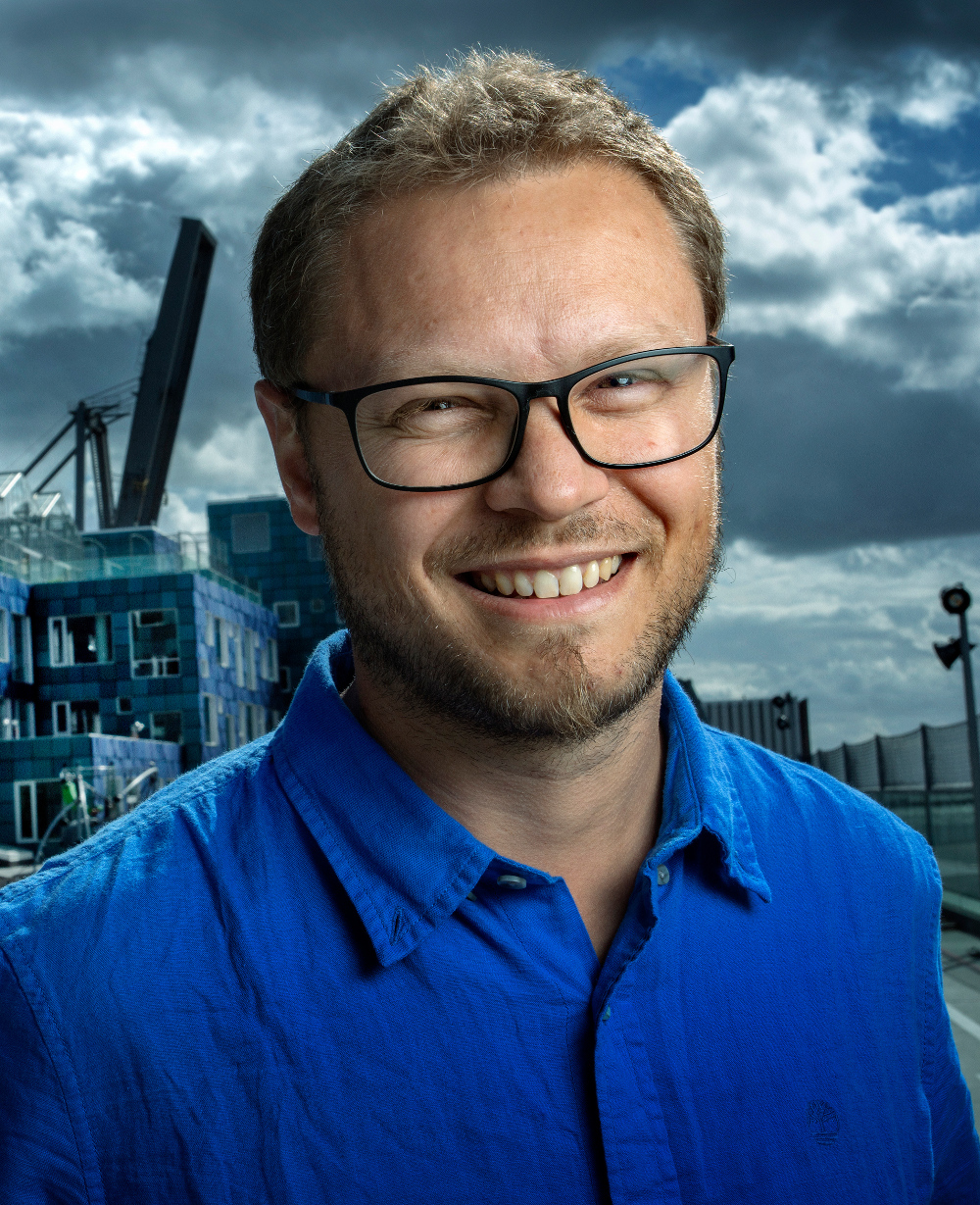}}]
		{\bf Pierre Pinson} (Fellow, IEEE) received the M.Sc. degree in applied mathematics from the National Institute of Applied Sciences (INSA), Toulouse, France, in 2002 and the Ph.D. degree in energetics from Ecole des Mines de Paris, France, in 2006. He is the chair of data-centric design engineering at Imperial College London, United Kingdom, Dyson School of Design Engineering. He is also an affiliated professor of operations research and analytics with the Technical University of Denmark and a chief scientist at Halfspace (Denmark). He is the editor-in-chief of the International Journal of Forecasting. His research interests include analytics, forecasting, optimization and game theory, with application to energy systems mostly, but also logistics, weather-driven industries and business analytics. He is a Fellow of the IEEE, an INFORMS member and a director of the International Institute of Forecasters (IIF).
	\end{IEEEbiography}
	\vskip -2.2\baselineskip plus -1fil
	\begin{IEEEbiography}[{\includegraphics[width=1in,height=1.25in,clip,keepaspectratio]{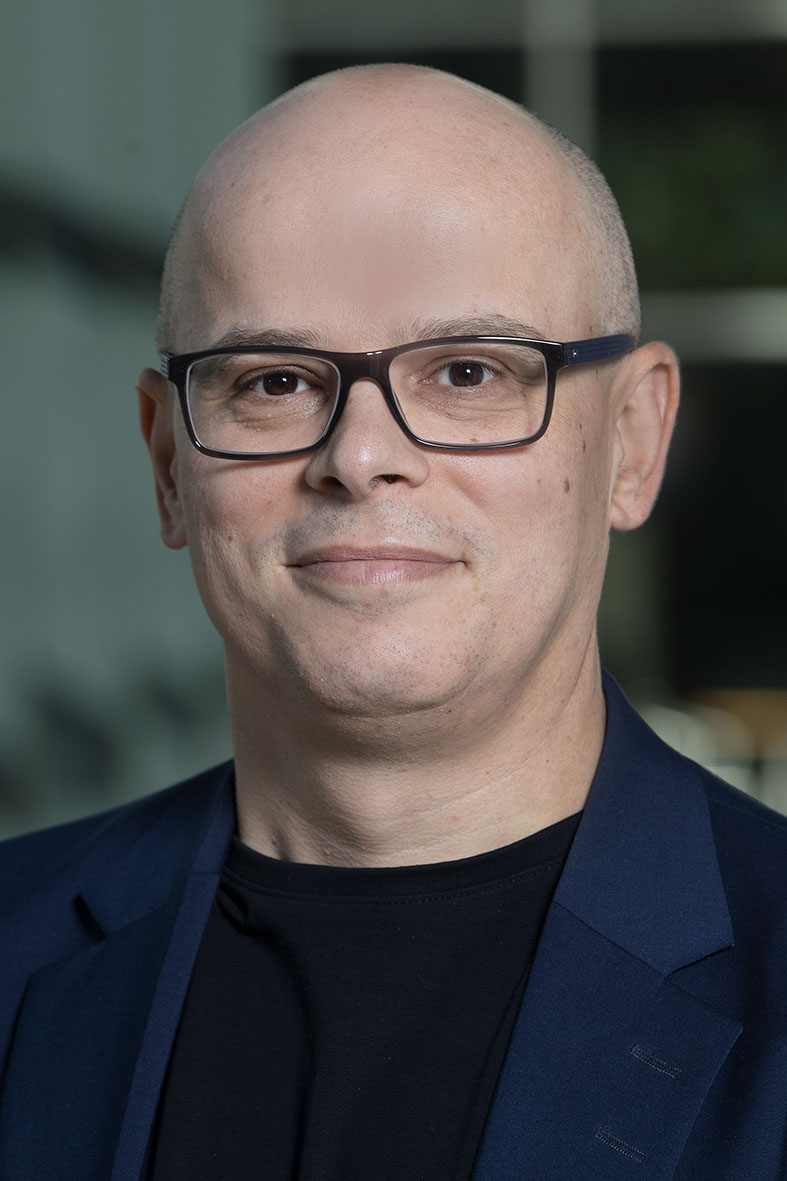}}]
		{\bf Petar Popovski} (Fellow, IEEE) is a Professor at Aalborg University, where he heads the section on Connectivity and a Visiting Excellence Chair at the University of Bremen. He received his Dipl.-Ing and M. Sc. degrees in communication engineering from the University of Sts. Cyril and Methodius in Skopje and the Ph.D. degree from Aalborg University in 2005. He received an ERC Consolidator Grant (2015), the Danish Elite Researcher award (2016), IEEE Fred W. Ellersick prize (2016), IEEE Stephen O. Rice prize (2018), Technical Achievement Award from the IEEE Technical Committee on Smart Grid Communications (2019), the Danish Telecommunication Prize (2020) and Villum Investigator Grant (2021). He was a Member at Large at the Board of Governors in IEEE Communication Society 2019-2021. He is currently an Editor-in-Chief of IEEEE JOURNAL ON SELECTED AREAS IN COMMUNICATIONS. He also serves as a Vice-Chair of the IEEE Communication Theory Technical Committee and the Steering Committee of IEEE TRANSACTIONS ON GREEN COMMUNICATIONS AND NETWORKING. Prof. Popovski was the General Chair for IEEE SmartGridComm 2018 and IEEE Communication Theory Workshop 2019. His research interests are in the area of wireless communication and communication theory. He authored the book ``Wireless Connectivity: An Intuitive and Fundamental Guide'', published by Wiley in 2020.
	\end{IEEEbiography}	
\end{document}